\documentclass{article}

\PassOptionsToPackage{authoryear,round}{natbib}
 \usepackage[preprint]{neurips_2026}

\usepackage[utf8]{inputenc} 
\usepackage[T1]{fontenc}    
\usepackage{hyperref}       
\usepackage{url}            
\usepackage{booktabs}       
\usepackage{amssymb}       
\usepackage{nicefrac}       
\usepackage{microtype}      
\usepackage{xcolor}         
\usepackage{graphicx}
\usepackage{enumitem}
\usepackage{amsmath}
\usepackage{amsthm}
\usepackage{makecell}
\usepackage{float}

\newtheorem{theorem}{Theorem}[section]
\newtheorem{lemma}[theorem]{Lemma}
\newtheorem{proposition}[theorem]{Proposition}
\newtheorem{corollary}[theorem]{Corollary}

\theoremstyle{definition}
\newtheorem{definition}{Definition}[section]

\theoremstyle{remark}

\usepackage{pifont}
\title{EnerGNN: Learning Optimization-Compatible Energy Functions for Exact Constrained Combinatorial Inference}

\author{%
Niki Triantafyllou\thanks{Corresponding author.}
\qquad
Maria M.~Papathanasiou \\
Sargent Centre for Process Systems Engineering, Department of Chemical Engineering \\
Imperial College London \\
\texttt{\{niki.triantafyllou20,maria.papathanasiou11\}@imperial.ac.uk}
}

\begin{document}

\maketitle


\begin{abstract}
Learning-based approaches to combinatorial optimization often predict
solutions directly or use predictions to restrict search, while
expressive energy-based models typically rely on approximate inference.
We introduce EnerGNN, a framework that learns energy functions over
complete binary configurations and minimizes them under explicit hard
constraints through mixed integer programming (MIP).
EnerGNN combines a graph neural network encoder with an
optimization-compatible readout and a nonlinear energy head,
enabling exact constrained inference over the learned energy.
An optimality-aware contrastive loss trains the energy to reflect
solution quality and constraint violations.
Across Quadratic Knapsack, Combinatorial Auctions, and a personalized
cell therapy supply chain problem, EnerGNN achieves near-optimal
solutions with 100\% feasibility.
On the supply chain problem, it achieves a 0.05\% mean primal gap
and a 17.4$\times$ speedup over the monolithic MIP.
The learned energy also generalizes to larger instances and supports
zero-shot constraint injection, adapting to changing constraints
without retraining while preserving feasibility.
\end{abstract}

\section{Introduction}

A central challenge in energy-based structured prediction is to combine expressive learned representations with tractable inference over structured outputs. Energy-based models (EBMs) assign scalar energies to complete output configurations, allowing dependencies and interactions between output variables to be captured through learned energy functions \citep{lecun2006tutorial, belanger2016structured}. However, minimizing expressive neural energies over discrete decision spaces is generally challenging, particularly when solutions must satisfy hard operational, physical, or logical constraints. Mixed integer programming (MIP) provides a framework for enforcing such constraints and obtaining globally optimal solutions, but learned energy functions must admit suitable mathematical representations to be optimized directly. This creates an interface problem between neural learning and exact optimization: \emph{How can a neural model learn expressive energy functions that an exact optimizer can minimize under hard constraints?}

Existing approaches address different aspects of this challenge. In learning-based combinatorial optimization, neural models have been used to construct solutions, predict variable assignments, and guide exact solvers \citep{bengio2021machine,khalil2017learning,gasse2019exact}. Methods such as Neural Diving \citep{nair2020solving} and ConPaS \citep{huang2024contrastive} use learned variable-level predictions to identify promising regions of the solution space, which are then explored through restricted optimization. In energy-based structured prediction, inference with expressive neural energies commonly relies on continuous relaxations, gradient-based optimization, or approximate decoding \citep{belanger2016structured, graber2019graph}. Meanwhile, ReLU neural networks can be represented exactly using mixed integer formulations \citep{fischetti2018deep, anderson2020strong}, providing a mechanism for optimizing learned nonlinear functions under explicit constraints. Together, these developments motivate learning energy functions that are designed to be directly optimized under explicit hard constraints, bringing expressive structured prediction and exact combinatorial inference into a unified framework.

We introduce \textbf{EnerGNN}, an energy-based framework that learns optimization-compatible energy functions over complete binary configurations and minimizes them subject to explicit hard constraints. As illustrated in Figure~\ref{fig:architecture}, EnerGNN combines a bipartite message-passing graph neural network (GNN) encoder with a decision-dependent readout and a piecewise linear ReLU energy head. The GNN encoder produces representations of the problem instance that are computed once and remain fixed during optimization. The readout combines these fixed representations with the binary decision variables to construct a configuration-dependent representation. Together, the readout and nonlinear ReLU energy head allow EnerGNN to model interactions between decision variables while admitting an exact MIP representation. At inference, the learned energy is minimized over the full encoded feasible region using a MIP solver, yielding a feasible solution that is globally optimal with respect to the learned energy when solved to optimality.

A key feature of EnerGNN is the separation of learned preferences from explicit feasibility constraints. The energy-based model learns to distinguish high-quality configurations from suboptimal alternatives through optimality-aware contrastive training, while hard constraints are enforced directly during inference. This separation allows the same learned energy to be optimized under different feasible regions, without retraining the model. In particular, deployment-time constraints can be added, removed, or modified by changing the inference MIP, enabling zero-shot constraint injection. This approach is complementary to neural algorithmic reasoning \citep{velivckovic2021neural}: rather than learning to execute an optimization algorithm, EnerGNN learns an energy function over structured decisions and delegates constrained inference to an exact optimizer.

Our contributions are threefold. First, we formulate constrained combinatorial inference as learning an optimization-compatible energy over complete binary configurations, separating learned preferences from explicit hard constraints. Second, we develop a graph-based energy architecture that captures nonlinear interactions between decision variables while admitting exact MIP representation and globally optimal inference with respect to the learned energy. Third, we develop an optimality-aware contrastive learning objective using feasible and infeasible candidate configurations to learn solution-level energy landscapes. We evaluate EnerGNN on Quadratic Knapsack (QKP), Combinatorial Auctions (CA), and a personalized cell therapy supply chain problem (CART). Our experiments demonstrate the benefits of nonlinear energy modeling for interaction-heavy objectives, the ability to obtain near-optimal solutions while satisfying hard constraints, generalization to larger problem sizes, and zero-shot constraint injection without retraining. The code and data used in this work are available at \url{https://github.com/nikitrian/EnerGNN}.

\begin{figure}[t!]
    \centering
    \makebox[\textwidth][c]{%
    \includegraphics[width=1.05\textwidth]{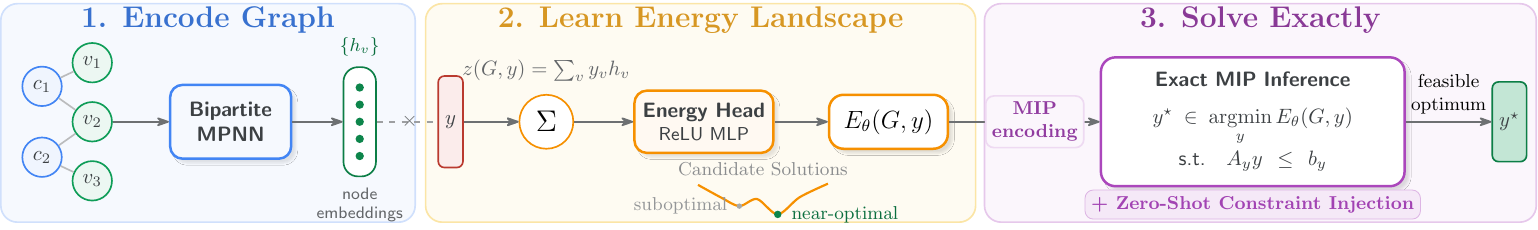}
    }
    \vspace{-0.5cm}
    \caption{EnerGNN encodes an optimization instance using a bipartite message-passing GNN, learns an optimization-compatible energy function over binary configurations, and performs exact constrained inference by globally minimizing the learned energy over the encoded feasible set.}
    \label{fig:architecture}
    \vspace{-6mm}
\end{figure}

\section{Related Work}

\noindent\textbf{Learning-based methods for combinatorial optimization.}
Learning-based approaches to combinatorial optimization commonly use neural models to amortize expensive search across related problem instances. Neural combinatorial optimization methods learn policies that construct solutions sequentially using graph-based or attention-based architectures, often trained through reinforcement learning \citep{khalil2017learning,bello2016neural,kool2018attention,kwon2020pomo}. Another line of work integrates learning into branch-and-bound algorithms, including branching-variable selection \citep{khalil2016learning,gasse2019exact,gupta2020hybrid} and search-tree node selection \citep{he2014learning}. Learning-based predict-and-search methods instead use neural predictions to identify promising variable assignments and solve reduced or restricted optimization problems. Neural Diving predicts distributions over variable assignments and fixes subsets of variables before solving reduced subproblems \citep{nair2020solving}. Predict-and-Search (PaS) uses GNN predictions to define search neighborhoods around predicted solutions \citep{han2023gnn}, while Contrastive Predict-and-Search (ConPaS) extends this approach with contrastive learning to improve solution prediction \citep{huang2024contrastive}.

\noindent\textbf{Differentiable optimization and constraint enforcement.}
Decision-focused learning trains predictive models using downstream decision quality, while differentiable optimization methods integrate optimization problems into neural networks and propagate learning signals through their solutions \citep{amos2017optnet,agrawal2019differentiable,ferber2020mipaal,paulus2021comboptnet, parmentier2025learning}. These approaches enable learning of optimization problem parameters, including objective coefficients and, in some formulations, constraints. Related methods enforce feasibility of neural predictions through differentiable correction, projection, or constraint-enforcement layers \citep{donti2021dc,min2024hardnet,wang2023linsatnet,tordesillas2023rayen,liang2024homeomorphic}. Unlike differentiable optimization approaches, EnerGNN uses the MIP solver for inference without differentiating through it during training. 

 \noindent\textbf{Energy-based structured prediction.}
Structured Prediction Energy Networks (SPENs) learn expressive energy functions over structured outputs and perform inference through continuous relaxation and gradient-based optimization \citep{belanger2016structured}. Input Convex Neural Networks (ICNNs) impose convexity with respect to the optimized inputs, enabling globally optimal inference in continuous convex settings \citep{amos2017input}. Graph Structured Prediction Energy Networks combine explicit local structure with learned higher-order interactions and perform iterative inference over a constrained relaxation \citep{graber2019graph}. \citet{blondel2022learning} develop
generalized Fenchel--Young losses for training energy
networks without differentiating through the inference
optimization problem. Energy-based models have also been integrated into decision-focused stochastic optimization, where learned energy functions are used to approximate optimization landscapes \citep{kong2022end}. More recently, \citet{huang2026unsupervised}
propose an energy-based Langevin sampler for mixed integer
optimization, combining neural sampling of integer
variables with linear programming for continuous variables. EnerGNN builds on energy-based structured prediction but designs its learned energy function to admit an exact MIP representation, enabling global minimization over binary configurations under explicit hard constraints without requiring convexity.

\noindent\textbf{Mixed integer representations of neural networks.}
Piecewise linear neural networks, particularly ReLU networks, can be represented exactly using mixed integer linear constraints \citep{tjeng2017evaluating,fischetti2018deep,anderson2020strong,tsay2021partition}. These formulations have been used for neural network verification and for embedding trained neural surrogate models inside optimization problems. Bound-tightening techniques can improve the strength and computational performance of these formulations \citep{grimstad2019relu}. EnerGNN uses these formulations to optimize a learned energy over binary configurations, with the GNN encoder evaluated before optimization and the readout and ReLU energy head represented exactly within the MIP formulation.

\section{Methodology}
\label{sec:method}
We present the EnerGNN framework, which combines energy learning over binary configurations with exact constrained inference through mixed integer optimization. We first formulate the learning and inference problems, then describe the graph-based energy architecture, the training objective, and its exact MIP encoding. Finally, we establish the theoretical properties of the resulting framework.

\enlargethispage{\baselineskip}

\subsection{Problem Setting}

We consider mixed integer optimization problems of the form
\begin{equation}
\begin{aligned}
    \min_{\mathbf{y},\mathbf{x}} \quad
    & f(\mathbf{y},\mathbf{x}) \\
    \text{s.t.} \quad
    & \mathbf{A}\mathbf{y}+\mathbf{B}\mathbf{x}
      \le \mathbf{b},\\
    & \mathbf{y}\in\{0,1\}^{n}.
\end{aligned}
\label{eq:original_problem}
\end{equation}
where $y$ denotes binary decision variables and $x$ denotes any remaining
operational variables.

We consider two settings. In direct decision problems, such as Quadratic
Knapsack (QKP) and Combinatorial Auctions (CA), $\mathbf{y}$ constitutes the full
decision vector. In hierarchical decomposition problems, such as the cell
therapy supply chain problem (CART), $\mathbf{y}$ represents high-level planning decisions whose assignments define a reduced downstream scheduling problem. In the latter setting, fixing $\mathbf{y}$ induces the
value function
\begin{equation}
F(\mathbf{y})
=
\min_{\mathbf{x}}
\left\{
f(\mathbf{y},\mathbf{x})
\;\middle|\;
\mathbf{A}\mathbf{y}+\mathbf{B}\mathbf{x}
\le \mathbf{b}
\right\}.
\label{eq:value_function}
\end{equation}
EnerGNN does not attempt to reproduce the original objective analytically.
Instead, it learns an energy function $E_\theta(\mathcal{G},\mathbf{y})$ over assignments of the binary decision variables, referred to as configurations, where $\mathcal{G}$ is a graph representation of the problem instance. Training uses solver-generated candidate configurations and their associated solution quality to learn an ordering in which lower energies correspond to higher-quality configurations. For direct decision problems, this ordering reflects the original objective, while for hierarchical decision problems it approximates the ordering induced by $F(\mathbf{y})$.

At inference, the learned energy is minimized subject to explicit hard
constraints:
\begin{equation}
\begin{aligned}
\mathbf{y}^\star \in
\arg\min_{\mathbf{y}\in\{0,1\}^{n}}
\quad & E_\theta(\mathcal{G},\mathbf{y}) \\
\text{s.t.} \quad
& \mathbf{A}_y\mathbf{y}\le\mathbf{b}_y.
\end{aligned}
\label{eq:neural_mip}
\end{equation}
The constraints define the feasible set, while the learned energy
defines the objective optimized over that set. This formulation
defines Neural-MIP inference.

\subsection{Architecture Design}

\paragraph{Graph Neural Network Encoder.}
Each instance is represented as a bipartite graph
$\mathcal{G} = (\mathcal{C} \cup \mathcal{V}, \mathcal{E})$,
where $\mathcal{V}$ denotes the binary decision variables and
$\mathcal{C}$ represents constraints or interactions, depending on
the problem structure. Following the variable-constraint graph
representation commonly used for learning in MIP solvers
\citep{gasse2019exact}, a bipartite message-passing GNN produces
node embeddings $\{\mathbf{h}_v\}_{v \in \mathcal{V}}$ and
$\{\mathbf{h}_c\}_{c \in \mathcal{C}}$, together with a global graph
embedding $\mathbf{h}_{\mathcal{G}}$. The encoder is evaluated once
per instance, and its outputs remain fixed during optimization.

\paragraph{Optimization-Compatible Readout.}
To enable exact optimization, the mapping
$\mathbf{y} \mapsto \mathbf{z}^0$ must admit a mixed integer linear
representation. We construct a benchmark-specific readout that
combines the embeddings of selected binary variables with additional
structural features:
\begin{equation}
    \mathbf{z}^0(\mathbf{y}) =
    \Bigl[
        \sum_{v \in \mathcal{V}} y_v \mathbf{h}_v,\;
        \mathbf{z}_{\mathrm{lin}}(\mathbf{y}),\;
        \mathbf{h}_{\mathcal{G}}
    \Bigr].
    \label{eq:readout}
\end{equation}
The additional features $\mathbf{z}_{\mathrm{lin}}(\mathbf{y})$ capture problem-specific information, such as unselected-item embeddings and capacity utilization in QKP, or occupied-item embeddings in CA. The global graph embedding $\mathbf{h}_{\mathcal{G}}$ provides an instance-level representation that remains fixed across binary configurations. All decision-dependent readout components are either affine in $\mathbf{y}$ or admit exact linearization using auxiliary variables. Benchmark-specific graph representations and readout definitions are provided in Appendix~\ref{app:benchmarks}.

\paragraph{Energy Head.}
A multilayer perceptron (MLP) with ReLU activations maps
$\mathbf{z}^0(\mathbf{y})$ to a scalar energy
$E_\theta(\mathcal{G},\mathbf{y})$.
The nonlinear energy head enables the model to capture interactions
between binary decisions, while its piecewise linear structure,
combined with the optimization-compatible readout, admits an exact
mixed integer linear representation.

\subsection{Optimality-Aware Energy Learning}

We train on candidate configurations $\mathcal{Y}=\mathcal{F}\cup\mathcal{I}$,
containing feasible and infeasible configurations. Each feasible candidate $\mathbf{y}_i$ is assigned a relative primal gap $g_i$, measured against the best available reference solution for that instance, while each infeasible candidate $\mathbf{y}_k$ is associated with a normalized constraint violation magnitude $\nu_k$. The loss combines an optimality-aware contrastive term, feasible calibration, and infeasibility shaping:
\begin{equation}
    \mathcal{L}
    =
    \mathcal{L}_{\mathrm{con}}
    +
    \lambda_{\mathrm{cal}} \mathcal{L}_{\mathrm{cal}}
    +
    \lambda_{\mathrm{inf}} \mathcal{L}_{\mathrm{inf}}.
\end{equation}

\paragraph{Optimality-aware contrastive loss.}
 For each instance, we construct a target distribution that assigns greater probability to feasible configurations with smaller primal gaps and zero probability to infeasible configurations. The learned energy induces a second distribution over the entire candidate pool, and the Kullback–Leibler divergence encourages the two distributions to agree.
\begin{equation}
\begin{aligned}
    \mathcal{L}_{\mathrm{con}}
    &= \mathrm{KL}(p^\star \,\|\, p_\theta), \\[0.3em]
    p_i^*
    &=
    \begin{cases}
    \dfrac{\exp(-g_i/\tau_{\mathrm{rank}})}
    {\sum_{j \in \mathcal{F}} \exp(-g_j/\tau_{\mathrm{rank}})},
    & i \in \mathcal{F}, \\[0.8em]
    0, & i \notin \mathcal{F},
    \end{cases}
    \\[0.3em]
    p_{\theta,i}
    &=
    \dfrac{\exp(-E_\theta(\mathcal{G},\mathbf{y}_i)/T)}
    {\sum_{j \in \mathcal{Y}} \exp(-E_\theta(\mathcal{G},\mathbf{y}_j)/T)},
    \qquad i \in \mathcal{Y}.
\end{aligned}
\end{equation}
where $T$ is the model softmax temperature and $\tau_{\mathrm{rank}}$ controls the target sharpness.

\paragraph{Calibration loss.}
The contrastive loss primarily supervises relative preferences within the candidate pool. We additionally calibrate the energies of feasible configurations against their relative primal gaps using Huber regression with an instance-adaptive transition scale.
\begin{equation}
    \mathcal{L}_{\mathrm{cal}}
    =
    \frac{1}{|\mathcal{F}|}
    \sum_{i \in \mathcal{F}}
    H_{\delta_{\mathcal{F}}}\!\left(
    E_\theta(\mathcal{G}, \mathbf{y}_i) - g_i
    \right).
\end{equation}
The transition scale $\delta_{\mathcal{F}}$ is computed from the primal-gap
span of feasible candidates. Details are given in Appendix~\ref{app:loss_details}.

\paragraph{Infeasibility loss.}
For infeasible configurations, we use an auxiliary Huber loss to align their energies with a logarithmically scaled measure of constraint violation:
\begin{equation}
    \mathcal{L}_{\mathrm{inf}}
    =
    \frac{1}{|\mathcal{I}|}
    \sum_{k \in \mathcal{I}}
    H_{\delta_{\mathrm{inf}}}\!\left(
    E_\theta(\mathcal{G}, \mathbf{y}_k)
    -
    \log(1+\nu_k)
    \right),
    \label{eq:infeasibility_loss}
\end{equation}
where $\nu_k \ge 0$ is the normalized constraint violation. This term provides a graded training signal within the infeasible region, while $\mathcal{L}_{\mathrm{con}}$ discourages infeasible candidates from receiving probability mass at the expense of high-quality feasible configurations. Feasibility at inference is enforced explicitly by the MIP constraints.

\subsection{Inference via Neural-MIP Encoding}

At inference, the trained GNN encoder is evaluated once, while the decision-dependent readout and ReLU energy head are represented exactly within the MIP using auxiliary variables and linear constraints. Valid pre-activation bounds are obtained through interval bound propagation using benchmark-specific input bounds \citep{gowal2018effectiveness}. Neurons with fixed activation states are eliminated to reduce the formulation size. The exact Neural-MIP formulation is detailed in Appendix~\ref{app:mip_encoding}, with the bound-tightening procedure described in Appendix~\ref{app:ibp}.

\subsection{Theoretical Properties}
\label{sec:theory}

We summarize the formal properties induced by the architectural design. 
Detailed proofs are provided in Appendix~\ref{app:theory}.

\begin{proposition}[MIP-representable energy]
\label{prop:representability}
For a fixed instance graph $\mathcal{G}$, assume that the encoder is evaluated
once, the readout $\mathbf{z}^0(\mathbf{y})$ admits an exact mixed integer
linear representation over a bounded domain, and the energy head is a
feed-forward ReLU network. Then
$\mathbf{y}\mapsto E_\theta(\mathcal{G},\mathbf{y})$ is exactly
MIP-representable on that domain.
\end{proposition}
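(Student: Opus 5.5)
The plan is to build the representation by composition, layer by layer, and show that each piece adds only linear constraints, auxiliary continuous variables, and auxiliary binary variables. First fix the meaning of ``exactly MIP-representable'' on a bounded domain $D\subseteq\{0,1\}^n$. It means there is a polyhedron $P$ in the variables $(\mathbf{y},\mathbf{w},\mathbf{u},e)$, where $\mathbf{w}$ are continuous auxiliaries and $\mathbf{u}$ are binary auxiliaries, such that for each $\mathbf{y}\in D$ the projection of $P\cap\{\mathbf{u}\text{ binary}\}$ onto the $e$ coordinate is exactly $\{E_\theta(\mathcal{G},\mathbf{y})\}$.

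Because the encoder is evaluated once, the quantities $\mathbf{h}_v$, $\mathbf{h}_c$ and $\mathbf{h}_{\mathcal{G}}$ are numerical constants. The first step then uses the hypothesis on the readout. We get a mixed integer linear system $\Phi_0(\mathbf{y},\mathbf{z}^0,\mathbf{w}_0,\mathbf{u}_0)$ whose projection is the graph of $\mathbf{y}\mapsto\mathbf{z}^0(\mathbf{y})$ on $D$. For concreteness, the part $\sum_v y_v\mathbf{h}_v$ is affine in $\mathbf{y}$ with constant coefficients, and $\mathbf{h}_{\mathcal{G}}$ is constant, so only $\mathbf{z}_{\mathrm{lin}}$ may need auxiliaries. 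Since $D$ is bounded and finite and the readout is exactly represented, $\mathbf{z}^0$ takes values in a bounded set. We record explicit bounds $\mathbf{L}^0\le\mathbf{z}^0\le\mathbf{U}^0$, either by maximizing and minimizing each coordinate over the finite set $D$ or by interval arithmetic.

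The second step is induction over the hidden layers, $\ell=1,\dots,K$. Given bounds $\mathbf{L}^{\ell-1}\le\mathbf{z}^{\ell-1}\le\mathbf{U}^{\ell-1}$, the pre-activations $\mathbf{a}^\ell=W^\ell\mathbf{z}^{\ell-1}+\mathbf{b}^\ell$ are affine, and interval bound propagation gives finite bounds $l_j\le a^\ell_j\le u_j$. Each ReLU $z_j=\max(0,a_j)$ is then handled by cases.
\begin{itemize}
\item If $u_j\le0$, impose $z_j=0$.
\item If $l_j\ge0$, impose $z_j=a_j$.
\item Otherwise use the big-$M$ formulation $z_j\ge a_j$, $z_j\ge 0$, $z_j\le a_j-l_j(1-\sigma_j)$, $z_j\le u_j\sigma_j$ with $\sigma_j\in\{0,1\}$.
\end{itemize}
In the last case we check directly that, for any $a_j\in[l_j,u_j]$, the feasible set in $z_j$ is exactly $\{\max(0,a_j)\}$. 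If $\sigma_j=1$, then $z_j=a_j$, and feasibility forces $a_j\ge0$. If $\sigma_j=0$, then $z_j=0$, and feasibility forces $a_j\le0$. At $a_j=0$ both choices give the same $z_j$. The output layer is affine, $e=W^{K+1}\mathbf{z}^K+b^{K+1}$.

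Finally, the conjunction of all these systems is a single finite mixed integer linear system. Its projection onto $(\mathbf{y},e)$ is the graph of $E_\theta(\mathcal{G},\cdot)$ on $D$, because each layer's system pins the layer output uniquely as a function of its input, and composing these uniquely determined maps gives the network.

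The main obstacle is the existence of finite big-$M$ bounds. Without them the ReLU constraints are not valid, since an unbounded ReLU is not MIP-representable with a bounded-coefficient formulation. The bounded-domain assumption is exactly what supplies these bounds. I would argue this carefully: boundedness of the readout's image follows from the finiteness of $D$, and bounds then propagate through finitely many affine layers and the monotone ReLU layers. The minor subtlety is that the readout's own auxiliaries must not introduce spurious values of $\mathbf{z}^0$, and this is guaranteed by the word ``exact'' in the hypothesis.
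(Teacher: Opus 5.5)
Your proposal is correct and follows the same route as the paper's proof. Both treat the encoder outputs as constants, use the assumed exact readout representation, encode each affine layer by linear equalities and each ReLU by the big-$M$ formulation of Lemma~\ref{lem:relu} (with fixed-activation neurons eliminated), and take the conjunction. The one thing you add is an explicit argument that finite pre-activation bounds exist, via the finite image of $\mathbf{z}^0$ over the binary domain plus interval propagation, whereas the paper's proof simply assumes such valid bounds are available when it invokes the lemma.
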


\begin{proof}[Proof sketch]
For fixed $\mathcal{G}$, encoder outputs are constants. The readout admits an
exact mixed integer linear representation by construction. A ReLU network is a
composition of affine maps and ReLU operators, each of which admits an exact
mixed integer linear encoding over bounded input domains. Hence the full energy
admits an exact MIP representation.
\end{proof}

\begin{corollary}[Exact feasible inference]
\label{cor:feasible_inference}
If the encoded feasible set
\[
\mathcal{C}
=
\left\{
\mathbf{y}\in\{0,1\}^{n}
\mid
\mathbf{A}_y\mathbf{y}\le \mathbf{b}_y
\right\}
\]
is nonempty, then solving
\[
\min_{\mathbf{y}\in\mathcal{C}} E_\theta(\mathcal{G},\mathbf{y})
\]
is equivalent to solving a mixed integer linear program and returns a feasible
solution that is globally optimal with respect to the learned energy.
\end{corollary}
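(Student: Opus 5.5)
The plan is to invoke Proposition~\ref{prop:representability} on the bounded domain $[0,1]^n \supseteq \{0,1\}^n$ and then argue that the resulting lifted MILP has the same optimal $\mathbf{y}$-solutions as the original problem. Concretely, the proposition gives a polyhedral description of the graph of $E_\theta(\mathcal{G},\cdot)$ using auxiliary continuous and binary variables. Call these $\mathbf{w}$: the readout auxiliaries, the pre-activations $\hat{\mathbf{z}}^\ell$, the post-activations $\mathbf{z}^\ell$, the activation indicators $\boldsymbol{\sigma}^\ell\in\{0,1\}$, and a scalar output variable $e$. The description has the following form: for every $\mathbf{y}\in\{0,1\}^n$, the set $P(\mathbf{y})=\{(\mathbf{w},e) : \mathbf{M}(\mathbf{y},\mathbf{w},e)\le \mathbf{m},\ \text{integrality}\}$ is nonempty, and every point in it has $e=E_\theta(\mathcal{G},\mathbf{y})$. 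I will make this explicit via the standard big-$M$ ReLU encoding:
\[
z\ge \hat z,\quad z\ge 0,\quad z\le \hat z - L(1-\sigma),\quad z\le U\sigma,
\]
with valid bounds $L\le \hat z\le U$ obtained from interval bound propagation over $[0,1]^n$. This forces $z=\max(\hat z,0)$ exactly.

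Next I form the lifted MILP. It minimizes $e$ subject to $\mathbf{A}_y\mathbf{y}\le\mathbf{b}_y$, $\mathbf{y}\in\{0,1\}^n$, and the encoding constraints. The objective and all constraints are linear, and the integrality is restricted to $\mathbf{y}$ and $\boldsymbol{\sigma}$, so this is a mixed integer linear program. I then prove the two directions of equivalence.

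\begin{itemize}
\item \emph{Projection.} The projection of the lifted feasible set onto $\mathbf{y}$ is exactly $\mathcal{C}$. If $\mathbf{y}\in\mathcal{C}$, then $P(\mathbf{y})$ is nonempty by exactness of the encoding, so $\mathbf{y}$ extends to a lifted feasible point. Conversely, any lifted feasible point has $\mathbf{y}$ satisfying the original constraints.
\item \emph{Objective.} On the lifted feasible set, $e=E_\theta(\mathcal{G},\mathbf{y})$.
\end{itemize}

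Hence the two problems have equal optimal values, and an optimal lifted solution projects to a minimizer of $E_\theta$ over $\mathcal{C}$.

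To conclude that an optimum is attained and returned, I use the following. The set $\mathcal{C}$ is finite, being a subset of $\{0,1\}^n$, and nonempty by hypothesis. So the minimum of $E_\theta(\mathcal{G},\cdot)$ over $\mathcal{C}$ exists. By the equivalence above, the MILP is feasible and bounded with an attained optimum. A MIP solver run to optimality therefore returns some $(\mathbf{y}^\star,\ldots)$, and $\mathbf{y}^\star$ is feasible for $\mathbf{A}_y\mathbf{y}\le\mathbf{b}_y$. It is also globally optimal for the learned energy, since any $\mathbf{y}\in\mathcal{C}$ with strictly lower energy would lift to a strictly better MILP solution.

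The main obstacle is the exactness claim in the first step: every $\mathbf{y}\in\{0,1\}^n$ must admit a lifted completion, and no spurious $e$ may occur. Both depend on the bounds $L,U$ being valid for every neuron over the whole domain, and on neurons with fixed activation being removed consistently. I would handle this by induction over layers. Interval bound propagation yields sound enclosures of $\hat z$ for all inputs in the domain, so the big-$M$ constraints never cut off the true activation value. When $L\ge0$ or $U\le0$, the neuron is replaced by the identity or by zero, which agrees with ReLU on the entire enclosure. Any linearized readout terms, such as products $y_v\cdot(\text{bounded expression})$, are treated the same way with McCormick-type constraints, which are exact for binary $y_v$.
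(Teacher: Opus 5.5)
Your proposal is correct and follows the paper's route. The paper also invokes Proposition~\ref{prop:representability}, writes the lifted MILP minimizing $e$ subject to the energy encoding, $\mathbf{A}_y\mathbf{y}\le\mathbf{b}_y$ and binarity, and reads off feasibility and global optimality; your projection argument and your use of the finiteness of $\mathcal{C}$ to guarantee the optimum is attained spell out what the paper leaves implicit. Two minor points change nothing. The bounds only need to be valid over some domain containing $\mathcal{C}$, as with the paper's knapsack-relaxation bounds for QKP, so requiring a lift for every $\mathbf{y}\in\{0,1\}^n$ is stronger than necessary. Integrality can also apply to readout auxiliaries such as the CART coverage binaries, not only to $\mathbf{y}$ and $\boldsymbol{\sigma}$.
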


\begin{corollary}[Constraint injection]
\label{cor:constraint_injection}
The same learned energy can be optimized over any bounded test-time feasible
set obtained by adding, removing, or modifying linear constraints over the same
decision variables, without retraining the EBM. If the test-time feasible set
is not contained in the domain used to compute the original Big-M bounds, the
bounds used in the MIP encoding must be recomputed for the new feasible set.
\end{corollary}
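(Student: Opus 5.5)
The plan is to derive Corollary~\ref{cor:constraint_injection} from Proposition~\ref{prop:representability} and Corollary~\ref{cor:feasible_inference}. The key observation is that the learned energy $E_\theta(\mathcal{G},\cdot)$ depends only on $\mathcal{G}$, $\theta$ and $\mathbf{y}$, never on the constraint data $(\mathbf{A}_y,\mathbf{b}_y)$. Fix the trained parameters $\theta$ and the instance graph $\mathcal{G}$. The encoder outputs $\{\mathbf{h}_v\}$, $\{\mathbf{h}_c\}$ and $\mathbf{h}_{\mathcal{G}}$ are then constants, and the readout $\mathbf{z}^0(\mathbf{y})$ of \eqref{eq:readout} and the ReLU head define a fixed function $E_\theta(\mathcal{G},\cdot)$ on $\{0,1\}^n$. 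Any test-time feasible set obtained by adding, removing or modifying linear constraints has the form $\mathcal{C}'=\{\mathbf{y}\in\{0,1\}^n \mid \mathbf{A}'_y\mathbf{y}\le\mathbf{b}'_y\}$. The Neural-MIP for it is \eqref{eq:neural_mip} with $(\mathbf{A}_y,\mathbf{b}_y)$ replaced by $(\mathbf{A}'_y,\mathbf{b}'_y)$. The objective part of the formulation, namely the auxiliary readout variables and the ReLU encoding constraints, is left unchanged. Only the feasibility block is swapped, so no parameter of the model is altered, and this is the sense of ``without retraining.''

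Next I verify exactness of the new formulation, splitting into two cases. First suppose $\mathcal{C}'$ lies inside the domain $D$ on which the original pre-activation bounds were computed by interval bound propagation. Then every $\mathbf{y}\in\mathcal{C}'$ produces pre-activations inside the certified intervals. Hence the Big-M constraints for each neuron have exactly one consistent completion, with the auxiliary variables equal to the true post-activations. By Proposition~\ref{prop:representability} restricted to $D$, the projection of the encoding onto $\mathbf{y}$ is therefore exactly the graph of $E_\theta(\mathcal{G},\cdot)$ over $\mathcal{C}'$. Intersecting with the linear constraints $\mathbf{A}'_y\mathbf{y}\le\mathbf{b}'_y$ gives an MILP. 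If $\mathcal{C}'$ is nonempty, Corollary~\ref{cor:feasible_inference} applied with $\mathcal{C}'$ in place of $\mathcal{C}$ shows that solving this MILP to optimality returns a point of $\mathcal{C}'$ that is globally optimal for the learned energy.

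In the second case $\mathcal{C}'\not\subseteq D$, for example when a capacity constraint is relaxed so that new $\mathbf{y}$ become admissible. Some pre-activation may then exceed its stored bound. If that happens, the Big-M constraints either cut off a genuinely feasible $\mathbf{y}$ or fail to force the ReLU relation, and exactness is lost. The remedy is to rerun interval bound propagation with input bounds valid on the new domain. Since $\mathcal{C}'$ is bounded and in fact finite, and the readout is affine or exactly linearizable in $\mathbf{y}$, this yields finite valid bounds. The first case then applies verbatim with $D$ replaced by the new domain. This is the recomputation requirement stated in the corollary.

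I expect the main obstacle to be stating precisely what ``valid bounds'' must guarantee. The encoding is exact only if, for every admissible $\mathbf{y}$, each neuron's true pre-activation lies in $[\ell,u]$. Otherwise the binary indicator formulation is not a faithful representation. I would isolate this as a short lemma: a Big-M ReLU encoding with valid bounds on the set $S$ has a projection onto the input and output variables equal to the ReLU graph over $S$. Once this lemma is in place, both cases above reduce to checking $\mathcal{C}'\subseteq S$.
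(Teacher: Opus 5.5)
Your proposal is correct and follows essentially the same route as the paper's proof: $E_\theta(\mathcal{G},\cdot)$ does not depend on the constraint data, so only the feasibility block of the Neural-MIP changes. The original Big-M constants stay valid when the test-time set lies in the domain on which they were computed, and otherwise bound propagation is rerun, which changes only encoding constants and never $\theta$. Your case split and proposed valid-bounds lemma (essentially Lemma~\ref{lem:relu} with bound validity quantified over a set $S$) spell out what the paper states briefly; one small nit is that the completion is unique in the post-activations but not in the binary indicator when a pre-activation is exactly zero, which does not affect the projection argument.
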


\begin{proposition}[Expressivity beyond factorized scoring]
If the readout contains the binary variables $\mathbf y$, or an affine
representation from which they can be recovered, a one-hidden-layer ReLU
energy can represent arbitrary quadratic pairwise interactions over binary
variables exactly, whereas a factorized linear score cannot represent a
nonzero pairwise interaction.
\end{proposition}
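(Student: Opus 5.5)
We prove the two claims separately: an explicit construction for the positive part, and a short algebraic argument for the negative part.

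\textbf{Positive part.} Let $Q=(q_{uv})$ be any symmetric coefficient matrix. The target energy is
\[
E(\mathbf y)=\sum_v c_v y_v+\sum_{u<v} q_{uv} y_u y_v .
\]
The readout contains $\mathbf y$ affinely, so composing with a fixed affine map lets us treat $y_u$ and $y_v$ as available inputs to the first layer. The key identity is that, for binary $y_u,y_v$,
\[
y_u y_v=\mathrm{ReLU}(y_u+y_v-1).
\]
This holds because $y_u+y_v-1\in\{-1,0,1\}$, and it equals $1$ exactly when both variables are $1$.

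We then build a one-hidden-layer network with one hidden neuron per pair $u<v$ with $q_{uv}\neq 0$. This neuron has pre-activation $y_u+y_v-1$, written as an affine function of the readout $\mathbf z^0(\mathbf y)$. Its output weight is $q_{uv}$.

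The linear terms $c_v y_v$ are handled in one of two ways:
\begin{itemize}
\item if the architecture allows an affine skip term, they go there directly;
\item otherwise we use $y_v=\mathrm{ReLU}(y_v)$, valid since $y_v\ge 0$, adding one hidden neuron per variable with output weight $c_v$.
\end{itemize}
Constants are absorbed into the output bias. The fixed components $\mathbf h_{\mathcal G}$ and the other readout features receive zero weights. Summing the neuron outputs reproduces $E$ exactly on $\{0,1\}^n$. The network uses at most $n+\binom n2$ hidden units.

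\textbf{Negative part.} A factorized linear score has the form $s(\mathbf y)=\sum_v w_v y_v+b$. Consider its second finite difference for a pair $u\neq v$, with all other coordinates held fixed:
\[
\Delta_{uv}s=s(1,1)-s(1,0)-s(0,1)+s(0,0).
\]
For any affine $s$ this equals $0$. For $E$ it equals $q_{uv}$. So no factorized score agrees with $E$ on $\{0,1\}^n$ when $q_{uv}\neq 0$.

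The same argument applies if "factorized" means any separable score $\sum_v \phi_v(y_v)$, since separable functions also have zero mixed differences.

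\textbf{Main obstacle.} The only delicate point is the recoverability hypothesis. If the readout contains $\mathbf y$ only through an affine image $\mathbf z=M\mathbf y+\mathbf d$ with a left inverse $L$, then $\mathbf y=L(\mathbf z-\mathbf d)$. The first layer is affine, so precomposing it with this affine recovery map keeps the network one-hidden-layer. This step needs to be stated carefully, but otherwise the argument is routine.
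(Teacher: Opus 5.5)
Your proof is correct and follows essentially the same route as the paper. Both arguments use the identity $y_u y_v=\max(0,y_u+y_v-1)$ with one hidden unit per nonzero pair, handle the linear terms via $y_v=\max(0,y_v)$ or a skip connection, and show that the mixed finite difference $\Delta_{uv}$ vanishes for factorized scores but equals $q_{uv}$ for the quadratic; your explicit absorption of the affine recovery map into the first layer is a detail the paper leaves implicit.
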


\section{Experiments}
\label{sec:experiments}

We evaluate EnerGNN on three combinatorial optimization problems: Quadratic Knapsack (QKP), Combinatorial Auctions (CA), and a Chimeric Antigen Receptor (CAR) T-cell therapy supply chain (CART). QKP and CA evaluate direct decision inference, while CART considers hierarchical decomposition, where EnerGNN selects high-level capacitated facility location decisions before solving a reduced downstream patient scheduling problem. Our experiments assess solution quality, feasibility, the contribution of nonlinear energy modeling, cross-scale generalization, and zero-shot constraint injection. Each benchmark examines a different aspect of the framework: QKP tests nonlinear interactions between variables, CA evaluates inference under packing constraints, and CART examines learning over complicating variables in hierarchical optimization.

\subsection{Benchmarks}

\noindent\textbf{Quadratic Knapsack.}
QKP is an interaction-heavy problem involving the selection of items under a capacity constraint, with both individual profits and pairwise interactions contributing to the objective. We consider instances with 150 binary item-selection variables and an interaction density of 0.30.

\noindent\textbf{Cell Therapy Supply Chain.}
CART is a time-indexed, capacitated facility location and patient allocation
MIP for personalized CAR T-cell therapy supply chains. The network comprises
16 candidate manufacturing facilities. The formulation incorporates
2 transportation modes and daily scheduling over a 130-day horizon,
with patient demand varying from 10 to 650 across different demand
distributions. EnerGNN optimizes the 16 first-stage facility location
variables, with each configuration of these variables inducing a tactical patient scheduling MIP.

\noindent\textbf{Combinatorial Auctions.}
CA is a packing problem with 175 binary bid-selection variables, extended to 250 in the cross-scale evaluation. Each bid consumes a subset of items, and each item can be allocated at most once. Because the original objective is linear and these instances are comparatively easy for modern MIP solvers, CA primarily evaluates constrained inference and generalization rather than computational speedup.

Detailed benchmark problem formulations are given in
Appendix~\ref{app:benchmarks}. Instance generation procedures and dataset
statistics are described in Appendix~\ref{app:data}. Benchmark-specific
Neural-MIP input bounds and hard feasibility constraints are detailed in
Appendices~\ref{app:input_bounds} and~\ref{app:hard_constraints},
respectively.

\subsection{Baselines and Metrics}

We compare against three classes of baselines that differ in how learning is
combined with optimization. \emph{Full MIP} solves the original problem using
branch-and-cut and provides a reference optimum.
\emph{Predict+Search}, \emph{Neural Diving}~\citep{nair2020solving}, and
\emph{ConPaS}~\citep{huang2024contrastive} use learned predictions to restrict
or guide the solver's search. Predict+Search fixes variables based on prediction
confidence and solves the resulting reduced problem. Neural Diving samples
variable assignments, fixes subsets of variables, and solves multiple reduced
subproblems. ConPaS instead imposes a neighborhood constraint around its
learned prediction while retaining the original objective.

\emph{Linear+MIP} and \emph{EnerGNN} learn objectives that are optimized
directly under explicit hard constraints. Linear+MIP uses additive
per-variable scores, whereas EnerGNN learns a joint energy over complete
configurations that can capture interactions between decisions. We evaluate
Linear+MIP with binary cross-entropy (BCE) and contrastive supervision,
and include \emph{EnerGNN (Linear)} as an ablation that replaces the ReLU energy head with a linear head over the same readout. This isolates the contribution of the nonlinear energy head. All learning-based methods use
the same bipartite GNN encoder and input features where applicable, but
differ in their prediction heads, training objectives, and inference
procedures. Full implementation details and inference settings are
provided in Appendix~\ref{app:baselines}.

We report mean primal gap, runtime, feasibility, and the fraction of instances with a primal gap of at most $1\%$ relative to the certified global optimum. Primal gaps
are computed over all test instances, assigning a $100\%$ penalty
to infeasible outputs. EnerGNN achieves $100\%$ feasibility across
the reported experiments, with hard constraints enforced directly
during inference.

\begin{table*}[h]
\setlength{\belowcaptionskip}{1pt}
\caption{
Main results across benchmarks (mean over test instances). We report primal gap,
runtime, feasibility, and the fraction of instances achieving either exact
recovery or a solution within $1\%$ of the global optimum.}
\centering
\small
\setlength{\tabcolsep}{2pt}
\resizebox{\textwidth}{!}{%
\begin{tabular}{l|cccc|cccc|cccc}
\toprule
& \multicolumn{4}{c|}{\textbf{QKP}} 
& \multicolumn{4}{c|}{\textbf{CART}} 
& \multicolumn{4}{c}{\textbf{CA}} \\
\cmidrule(lr){2-5} \cmidrule(lr){6-9} \cmidrule(lr){10-13}
\textbf{Method} 
& \makecell{Gap\\(\%)} & \makecell{Time\\(s)} & \makecell{$\leq$1\%\\(\%)} & \makecell{Feas.\\(\%)}
& \makecell{Gap\\(\%)} & \makecell{Time\\(s)} & \makecell{$\leq$1\%\\(\%)}  & \makecell{Feas.\\(\%)}
& \makecell{Gap \\(\%)} & \makecell{Time\\(s)} & \makecell{$\leq$1\%\\(\%)} & \makecell{Feas.\\(\%)} \\
\midrule
Full MIP (Gurobi)
& 0.00 & 525.21 & 100 & 100
& 0.00 & 3728.82 & 100 & 100
& 0.00 & 0.14 & 100 & 100 \\

Predict+Search
& 1.13 & 0.12 & 97 & 99
& 22.02 & 251.79 & 21 & 83
& 29.81 & 0.01 & 0 & 100 \\

Neural Diving 
& 1.04 & 1.52 & 99 & 99 
& 1.53 & 398.34 & 65 & 100 
& 2.91 & 1.23 & 5 & 100 \\ 

ConPaS 
& 0.95 & 1.03 & 100 & 100 
& 0.02 & 3404.19 & 100 & 100 
& 0.90 & 0.26 & 65 & 100 \\

Linear+MIP (BCE)
& 4.92 & 0.03 & 20.1 & 100
& 15.81 & 273.60 & 17 & 100
& 83.99 & 0.02 & 0 & 100 \\

Linear+MIP (Contrastive)
& 58.55 & 0.03 & 0.00 & 100
& 24.18 & 410.04 & 4 & 100
& 30.82 & 0.02 & 0 & 100 \\

EnerGNN (Linear)
& 26.87 & 0.28 & 0.00 & 100
& 21.80 & 391.36 & 5 & 100
& 0.20 & 0.31 & 95 & 100 \\

\midrule
EnerGNN (MLP)
& 0.31 & 3.56 & 99 & 100
& 0.05 & 214.03 & 100 & 100
& 0.53 & 8.62 & 77 & 100 \\
\bottomrule
\end{tabular}}

\label{tab:main_results}
\end{table*}

\subsection{Main Results}

Table~\ref{tab:main_results} summarizes the results across the three benchmarks.

On QKP, EnerGNN (MLP) achieves the lowest mean primal
gap among learning-based methods at 0.31\%, with 99\%
of instances within 1\% of the global optimum and 100\%
feasibility. ConPaS, Neural Diving, and Predict+Search
achieve mean gaps of 0.95\%, 1.04\%, and 1.13\%,
respectively, while the linear scoring baselines perform
substantially worse. The difference between EnerGNN (MLP) and its linear head ablation (0.31\% versus 26.87\%) is particularly informative. Both use the same graph encoder and decision-dependent readout, but the nonlinear head can capture additional interactions between readout features. Predict+Search, Neural Diving, and ConPaS remain faster, illustrating a trade-off between solution quality and inference time.

On CART, the strategic facility location decisions determine a much
larger downstream patient allocation MIP. Neural Diving
and ConPaS use learned predictions to restrict the strategic search,
achieving mean gaps of 1.53\% and 0.02\% in 398.34~s and
3404.19~s, respectively. While Neural Diving uses parallel dives, allowing flips in the ConPaS neighborhood requires retaining all candidate facilities in our CART formulation, preventing the elimination of facility-dependent patient allocation dimensions.
EnerGNN instead learns the downstream value of complete strategic
configurations offline and optimizes this learned energy during
inference. Once the facility configuration is selected, the
operational MIP is constructed using only the selected facilities,
reducing the number of variables and constraints by approximately 77\% (Appendix~\ref{sec:cart_scale_cons}). EnerGNN
achieves a 0.05\% mean gap in 214.03~s, with all test instances
within 1\% of the global optimum and 100\% feasibility, making
it approximately 1.9$\times$ faster than Neural Diving,
15.9$\times$ faster than ConPaS, and 17.4$\times$ faster than
the monolithic MIP.

On CA, the full MIP is solved by Gurobi in 0.14~s, so this benchmark is
not intended to demonstrate solver acceleration. Instead, it tests whether a
learned energy can be optimized under a large number of explicit packing
constraints. The linear EnerGNN head achieves a 0.20\% mean gap with 95\% of
instances within 1\% of the optimum and 100\% feasibility. It outperforms the nonlinear variant, which achieves
a 0.53\% mean gap at substantially higher inference
cost. In contrast to QKP, CA has a linear original objective, and the linear energy head performs better on these instances. These results indicate that the benefit of nonlinear energy heads depends on the problem structure.

Figure~\ref{fig:energy} further examines whether
the learned energy reflects the quality of complete
configurations. The strong rank correlations between
energy and primal gap across QKP, CART, and CA
($\rho \geq 0.94$) indicate that lower energy
configurations generally correspond to better
solutions.

\begin{figure}[t]
    \centering
    \includegraphics[width=0.70\textwidth]{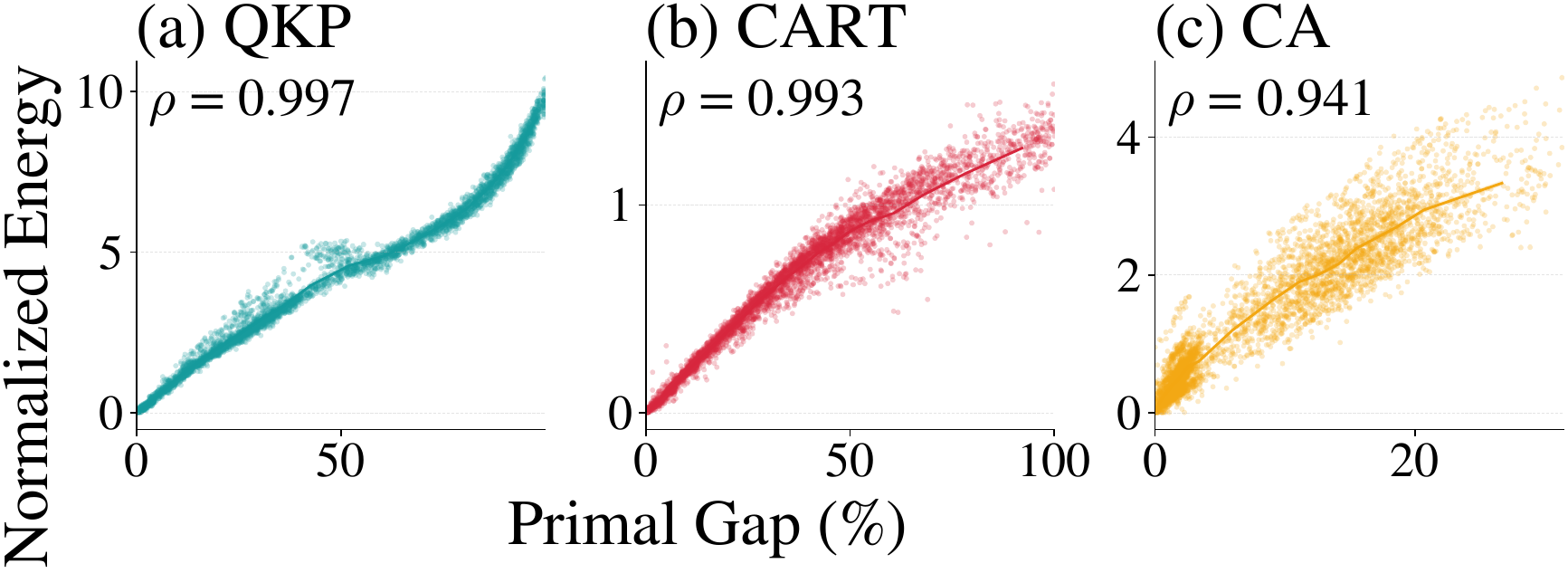}
    \vspace{-0.1cm}
    \caption{
    Energy vs. solution quality. Spearman $\rho$ is reported per benchmark.
    }
    \label{fig:energy}
\vspace{-2mm}
\end{figure}

\begin{table}[h]
\setlength{\abovecaptionskip}{2pt}
\setlength{\belowcaptionskip}{2pt}
\caption{Cross-scale generalization without retraining.}
\centering
\small
\setlength{\tabcolsep}{5pt}
\begin{tabular}{llccc}
\toprule
\textbf{Benchmark}
& \textbf{Train $\rightarrow$ test}
& \makecell{\textbf{Primal Gap }\textbf{(\%)}}
& \makecell{\textbf{Instances $\leq1\%$ }\textbf{(\%)}}
& \makecell{\textbf{Feasibility }\textbf{(\%)}} \\
\midrule
CART
& $m=16 \rightarrow 12$
& 0.98 & 63 & 100 \\

CART
& $m=16 \rightarrow 16$
& 0.05 & 100 & 100 \\

CART
& $m=16 \rightarrow 20$
& 1.23 & 49 & 100 \\

\midrule
CA
& $175 \rightarrow 175$
& 0.20 & 95 & 100 \\

CA
& $175 \rightarrow 250$
& 0.98 & 69 & 100 \\
\bottomrule
\end{tabular}
\label{tab:generalization}
\end{table}

\subsection{Cross-Scale Generalization}

We evaluate whether EnerGNN transfers to different problem sizes without
retraining. In the CART setting, the model is trained on instances with
$m=16$ candidate facilities, and is evaluated at $m=12$ and $m=20$. In CA, the number of
binary bid variables increases from 175 during training to 250 at test time. Table~\ref{tab:generalization} shows that EnerGNN maintains 100\% feasibility across all evaluated scales. In CART, the model achieves a mean gap of 0.98\% when transferred to
$m=12$ facilities and 1.23\% when extrapolated to $m=20$, compared with
0.05\% at the training scale. For CA, we report the linear head variant of EnerGNN, which achieves the best performance on this benchmark. Results for the nonlinear energy head are provided in Appendix~\ref{app:cross_scale_ca}. Increasing the number of bids from 175 to 250 increases the mean primal gap from 0.20\% to 0.98\%. These results show that the learned energy can be
applied to variable spaces of different sizes without retraining, although
solution quality degrades under larger extrapolation shifts.

\begin{table*}[t]
\centering
\caption{Zero-shot constraint injection.
We report the fraction of originally selected variables removed, the mean Hamming
shift from the original optimum, mean primal gap, the fraction of instances within
$1\%$ of the modified global optimum, and feasibility.}
\label{tab:constraint_injection}
\small
\setlength{\tabcolsep}{2.2pt}
\begin{tabular}{llcc|ccc|ccc|ccc}
\toprule
& & & &
\multicolumn{3}{c|}{\textbf{EnerGNN}} &
\multicolumn{3}{c|}{\textbf{Neural Diving}} &
\multicolumn{3}{c}{\textbf{ConPaS}} \\
\cmidrule(lr){5-7}
\cmidrule(lr){8-10}
\cmidrule(lr){11-13}

\textbf{Benchmark}
& \makecell{\textbf{Constraint}\\\textbf{shift}}
& \makecell{Selected\\removed (\%)}
& \makecell{Hamming\\shift}
& \makecell{Gap\\(\%)}
& \makecell{$\leq 1\%$\\(\%)}
& \makecell{Feas.\\(\%)}
& \makecell{Gap\\(\%)}
& \makecell{$\leq 1\%$\\(\%)}
& \makecell{Feas.\\(\%)}
& \makecell{Gap\\(\%)}
& \makecell{$\leq 1\%$\\(\%)}
& \makecell{Feas.\\(\%)} \\
\midrule

\makecell[l]{CART\\(facilities)}
& Remove 1
& 31.35 & 3.84
& 0.21 & 96.24 & 100
& 2.99 & 40.44 & 89.66
& 0.30 & 87.15 & 100 \\

& Remove 2
& 50.81 & 5.27
& 0.34 & 93.27 & 100
& 4.38 & 25.06 & 72.85
& 0.70 & 77.96 & 99.77 \\

& Remove 3
& 61.99 & 6.26
& 0.45 & 91.23 & 100
& 4.32 & 20.18 & 49.12
& 1.08 & 69.88 & 98.28 \\

\midrule

\makecell[l]{CA\\(bids)}
& Remove 5
& 20.30 & 34.87
& 0.21 & 93.36 & 100
& 3.41 & 9.73 & 100
& 1.13 & 54.09 & 100 \\

& Remove 10
& 40.61 & 38.77
& 0.17 & 95.27 & 100
& 4.12 & 5.09 & 100
& 1.45 & 41.09 & 100 \\

& Remove 15
& 60.91 & 41.36
& 0.16 & 95.73 & 100
& 4.76 & 3.82 & 100
& 1.62 & 38.27 & 100 \\

& Force 1
& -- & 31.25
& 0.07 & 99.00 & 100
& 3.60 & 17.00 & 100
& 1.06 & 57.00 & 100 \\

\bottomrule
\end{tabular}
\vspace{-2mm}
\end{table*}

\subsection{Zero-Shot Constraint Injection}

We evaluate whether a trained EnerGNN can be reused after the feasible region changes without retraining. The learned energy is held fixed, and only the constraints of the inference MIP are modified. In CART ($m=16$), we deactivate one, two, or three facilities selected in the original optimum. In CA, we remove 5, 10, or 15 bids selected in the original optimal allocation, corresponding to approximately 20\%, 40\%, and 60\% of the selected bids, respectively, and additionally test a setting in which the highest-value bid rejected by the original optimal solution is forced into the allocation. The CART evaluation comprises 1,092 perturbations in total (319, 431, and 342 cases for removing one, two, and three selected facilities, respectively). For CA, each bid-removal setting is evaluated over 1,100 perturbations, while the forced-bid setting contains 100 perturbations. For CA constraint-injection experiments, we use the linear EnerGNN variant, which performs best on the in-distribution CA benchmark.

Table~\ref{tab:constraint_injection} compares EnerGNN with Neural Diving and ConPaS under the same constraint shifts. EnerGNN remains feasible in every case and achieves the lowest mean gap across all tested perturbations. On CART, its mean gap increases from 0.21\% to 0.45\% as one to three originally selected facilities are deactivated, while maintaining at least 91.23\% of instances within 1\% of the modified global optimum. By contrast, Neural Diving degrades substantially under the same shifts, reaching only 49.12\% feasibility when three facilities are deactivated, while ConPaS exhibits larger gaps than EnerGNN across all CART perturbations. On CA, EnerGNN achieves mean gaps between 0.07\% and 0.21\%, with at least 93.36\% of instances within 1\% of the modified optimum. Both baselines obtain substantially larger gaps across all CA constraint shifts. The perturbations induce substantial changes in the selected solutions, with a mean Hamming shift of 6.26 when three CART facilities are deactivated and 41.36 when 15 CA bids are removed. Additional constraint-injection experiments are reported in Appendix~\ref{app:zero_shot}.

\subsection{Loss Ablations}

\begin{table}[H]
\setlength{\abovecaptionskip}{2pt}
\setlength{\belowcaptionskip}{2pt}
\caption{Loss ablation.}
\centering
\small
\setlength{\tabcolsep}{4pt}
\begin{tabular}{lcc|cc|cc}
\toprule
& \multicolumn{2}{c|}{\textbf{CART}} 
& \multicolumn{2}{c|}{\textbf{QKP}} 
& \multicolumn{2}{c}{\textbf{CA}} \\
\cmidrule(lr){2-3}\cmidrule(lr){4-5}\cmidrule(lr){6-7}
\textbf{Variant}
& Gap (\%) & $\leq$1\% (\%)
& Gap (\%) & $\leq$1\% (\%)
& Gap (\%) & $\leq$1\% (\%) \\
\midrule
Full EnerGNN      
& 0.05 & 100   
& 0.31 & 99 
& 0.53 & 77 \\

No calibration loss   
& 0.78 & 70  
& 0.88 & 68 
& 7.69 & 4 \\

No infeasibility loss 
& 0.06 & 100
& 0.39 & 99 
& 0.58 & 77 \\
\bottomrule
\end{tabular}

\label{tab:loss_ablation}
\end{table}

We ablate the loss components used to shape the energy landscape. The full loss
combines an optimality-aware contrastive loss, feasible calibration, and
infeasibility shaping. We additionally evaluate an alternative loss enforcing strict feasible--infeasible ordering in Appendix~\ref{app:strict_dominance}.
Table~\ref{tab:loss_ablation} shows that the calibration loss is important for learning
an energy scale that reflects solution quality. Removing calibration increases the mean gap on CART from $0.05\%$
to $0.78\%$ and reduces the fraction of instances within $1\%$ of the global
optimum from $100\%$ to $70\%$. On QKP, the corresponding fraction decreases
from $99\%$ to $68\%$, while the largest degradation occurs on CA, where the
mean gap increases from $0.53\%$ to $7.69\%$ and only $4\%$ of instances remain
within $1\%$ of the optimum. In contrast, removing the infeasibility term has only a limited effect across all three benchmarks, suggesting that its contribution to solution quality is comparatively small.

\subsection{Inference Size and Tractability}

Table~\ref{tab:inference_size} reports the size of the Neural-MIP, where
$|\mathbf y|$ denotes the original binary variables optimized by
EnerGNN. The graph encoder is evaluated once before optimization and therefore
does not contribute optimization variables. Additional binaries arise from
the exact readout linearizations and the compact ReLU energy head.
Appendix~\ref{app:head_arch} shows that increasing the width or depth of the
energy head substantially increases inference time without
improving solution quality.

\begin{table*}[!htbp]
\centering
\small
\setlength{\tabcolsep}{4pt}
\caption{Neural-MIP formulation dimensions and mean inference time
across test instances. For CART, inference time excludes
downstream patient scheduling.}
\label{tab:inference_size}
\begin{tabular}{lcccccccc}
\toprule
\textbf{Benchmark}
& $|\mathbf y|$
& \makecell{\textbf{Readout}\\\textbf{binaries}}
& \makecell{\textbf{ReLU}\\\textbf{binaries}}
& \makecell{\textbf{Total}\\\textbf{binaries}}
& \makecell{\textbf{Continuous}\\\textbf{variables}}
& \makecell{\textbf{Hard feasibility}\\\textbf{constraints}}
& \makecell{\textbf{Total}\\\textbf{constraints}}
& \makecell{\textbf{Inference}\\\textbf{time (s)}} \\
\midrule
CART & 16  & 4 & 32 & 52  & 290 & 1      & 461    & 1.46  \\
QKP    & 150 & 0 & 32 & 182 & 292 & 1      & 389    & 3.56  \\
CA     & 175 & 0 & 32 & 207 & 298 & 131.74 & 525.74 & 8.62 \\
\bottomrule
\end{tabular}

\end{table*}

\subsection{Limitations}
EnerGNN requires the decision-dependent energy head to remain MIP-representable, creating an expressiveness--tractability trade-off in the embedded component. In our experiments, compact shallow heads are sufficient, while increasing width or depth can substantially increase inference time without systematically improving solution quality. Additional representational capacity can instead be placed in the GNN encoder, which is evaluated before optimization and does not increase Neural-MIP size. Training also relies on solver-generated candidate configurations and labels,
introducing a one-time offline cost that is most appropriate for repeated-deployment
settings; a complete cost and amortization analysis is provided in
Appendix~\ref{app:ch5_amortisation}. Finally, exact inference is with respect to the learned energy rather than the original objective. Constraint injection guarantees feasibility with respect to the encoded constraints, but solution quality may degrade under sufficiently large distribution or feasible-set shifts.

\section{Conclusion}

We introduced EnerGNN, an energy-based framework that learns optimization-compatible energy functions over binary configurations and minimizes them under explicit hard constraints through mixed integer optimization. The framework combines a graph-based encoder with a MIP-representable readout and nonlinear energy head, enabling exact optimization of the learned energy.

Our experiments demonstrate the framework across direct decision and
hierarchical decomposition settings. EnerGNN achieves near-optimal solutions
with 100\% feasibility across Quadratic Knapsack, Combinatorial Auctions,
and a personalized cell therapy supply chain problem. On the latter, EnerGNN achieves substantial computational speedups
over the monolithic MIP and learning-based baselines while
maintaining near-optimal solution quality. EnerGNN also generalizes to larger problem instances
without retraining and supports zero-shot constraint injection, allowing
the same learned energy to be optimized under changing operational
constraints while preserving feasibility. This adaptability is
particularly valuable in deployment settings where operational
requirements and resource availability may change over time.

More broadly, EnerGNN suggests a design principle for reliable AI systems.
When discrete decisions must satisfy hard constraints, learning can shape
objectives that algorithmic modules optimize rather than producing final
decisions directly. This shifts the learned object from the decision itself to
an optimization-compatible objective over decisions, allowing learned
preferences and explicit constraints to remain separate at inference.

{\small
\bibliographystyle{unsrtnat}
\bibliography{mybibliography}
}

\newpage

\appendix

\section{Proofs of Theoretical Properties}
\label{app:theory}

We provide the proofs for the theoretical properties stated in
Section~\ref{sec:theory}. Throughout this appendix, the instance graph
$\mathcal{G}$ is fixed. The graph encoder is evaluated once before inference,
and its outputs are treated as constants. The optimization variables in
Neural-MIP inference are therefore the original binary decision variables,
the readout auxiliary variables, the ReLU activation variables, and the scalar
energy variable.

\subsection{Preliminaries}

\begin{definition}[MIP-representability]
\label{def:mip_representable}
Let $\mathcal{D}\subset\mathbb{R}^d$ be a bounded domain. A function
$f:\mathcal{D}\to\mathbb{R}$ is MIP-representable on $\mathcal{D}$ if its graph
restricted to $\mathcal{D}$ can be described exactly by a finite collection of
linear constraints involving auxiliary continuous and integer variables. Equivalently, there
exists a mixed integer linear feasible set whose projection onto the
input-output coordinates is exactly
\[
    \left\{(\mathbf{x},t) \mid t=f(\mathbf{x})\right\}.
\]
\end{definition}

We use the following standard exact representation of a ReLU unit.

\begin{lemma}[Exact ReLU encoding]
\label{lem:relu}
Let $h=\max\{0,a\}$, where the pre-activation $a$ has finite bounds
\[
    \underline{a} \le a \le \overline{a}.
\]
Assume $\underline{a}\le 0 \le \overline{a}$. Then the graph of the ReLU
can be represented exactly using one binary variable $z\in\{0,1\}$ and the
linear constraints
\begin{align}
    h &\ge a, \label{eq:relu_lb_a}\\
    h &\ge 0, \label{eq:relu_lb_zero}\\
    h &\le a-\underline{a}(1-z), \label{eq:relu_ub_a}\\
    h &\le \overline{a}z. \label{eq:relu_ub_zero}
\end{align}
If the bounds imply that the ReLU is always active or always inactive, the
binary variable can be removed and the ReLU can be replaced by a linear
equality.
\end{lemma}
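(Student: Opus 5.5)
The statement is the exact ReLU encoding (Lemma~\ref{lem:relu}). The plan is to show two things. First, every point $(a,h)$ with $h=\max\{0,a\}$ and $a\in[\underline{a},\overline{a}]$ satisfies \eqref{eq:relu_lb_a}--\eqref{eq:relu_ub_zero} for some $z\in\{0,1\}$. Second, every $(a,h,z)$ satisfying the constraints with $a$ in the bounds has $h=\max\{0,a\}$. Together these say that the projection of the feasible set onto $(a,h)$ is exactly the graph of the ReLU on $[\underline{a},\overline{a}]$, as required by Definition~\ref{def:mip_representable}.

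\textbf{Soundness.} Split on the value of $z$.
\begin{itemize}
\item If $z=0$, constraint \eqref{eq:relu_ub_zero} gives $h\le 0$. With \eqref{eq:relu_lb_zero} this forces $h=0$. Constraint \eqref{eq:relu_lb_a} then gives $a\le 0$, so $h=0=\max\{0,a\}$. Constraint \eqref{eq:relu_ub_a} reads $0\le a-\underline{a}$, which holds automatically and so imposes nothing extra.
\item If $z=1$, constraint \eqref{eq:relu_ub_a} gives $h\le a$. With \eqref{eq:relu_lb_a} this forces $h=a$. Constraint \eqref{eq:relu_lb_zero} then forces $a\ge 0$, so $h=a=\max\{0,a\}$. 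Constraint \eqref{eq:relu_ub_zero} reads $a\le\overline{a}$, which is true by the bounds.
\end{itemize}

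\textbf{Completeness.} Given $a$ in the bounds, choose $z=1$ if $a\ge 0$ and $z=0$ otherwise, and set $h=\max\{0,a\}$. Then check the four inequalities directly. The only ones needing the bounds are $0\le a-\underline{a}$ when $z=0$, and $a\le \overline{a}$ when $z=1$. The hypothesis $\underline{a}\le 0\le\overline{a}$ is what makes both branches available: the $z=0$ branch is used for $a<0$ and the $z=1$ branch for $a\ge 0$.

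\textbf{Degenerate cases.} If $\overline{a}\le 0$, the ReLU is identically zero on the domain, so it can be replaced by $h=0$. If $\underline{a}\ge 0$, it equals the identity, so it can be replaced by $h=a$. In either case no binary variable is needed.

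There is no real obstacle here. The only point needing care is making sure both branches use the finite bounds correctly, so that the big-M constraint that is "switched off" in each branch is genuinely implied by $a\in[\underline{a},\overline{a}]$ rather than cutting off feasible points.
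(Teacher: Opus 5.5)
Your proof is correct and follows the paper's argument: it uses the same case split on $z$, where $z=0$ forces $h=0$ and $a\le 0$, and $z=1$ forces $h=a$ and $a\ge 0$. You also spell out two things the paper leaves implicit: the completeness direction (the switched-off big-M constraint in each branch is implied by $a\in[\underline{a},\overline{a}]$, so no point of the graph is cut off) and the degenerate always-active and always-inactive cases.
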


\begin{proof}
Consider first the case $z=0$. Constraint~\eqref{eq:relu_ub_zero} gives
$h\le 0$, and constraint~\eqref{eq:relu_lb_zero} gives $h\ge 0$. Hence
$h=0$. Constraint~\eqref{eq:relu_lb_a} then implies $a\le 0$. Therefore
the inactive case represents $h=\max\{0,a\}=0$.

Now consider the case $z=1$. Constraint~\eqref{eq:relu_ub_a} gives
$h\le a$, and constraint~\eqref{eq:relu_lb_a} gives $h\ge a$. Hence
$h=a$. Constraint~\eqref{eq:relu_lb_zero} also implies $a\ge 0$.
Therefore the active case represents $h=\max\{0,a\}=a$.

The two cases together describe exactly the graph of the ReLU over the
bounded interval $[\underline{a},\overline{a}]$.
\end{proof}

\subsection{Proof of Proposition~\ref{prop:representability}}

\begin{proof}
Fix an instance graph $\mathcal{G}$. The graph encoder produces node
embeddings and a global graph embedding. Since the encoder is evaluated
before inference, these embeddings are constants inside the optimization
problem.

The readout maps the binary decision vector $\mathbf{y}$ to an input feature
vector $\mathbf{z}^0(\mathbf{y})$ for the energy head. By construction, each
component of the readout is either affine in $\mathbf{y}$ or admits an exact
mixed integer linear representation using auxiliary variables. The graph of
the readout can therefore be described by linear constraints together with
integrality restrictions.

The energy head is a feed-forward ReLU network. Each layer applies an affine
transformation followed by componentwise ReLU activations. The affine
transformations are represented by linear equalities. Each ReLU activation is
represented exactly by Lemma~\ref{lem:relu}, using valid lower and upper bounds
on the corresponding pre-activation.

Combining the readout representation, the affine layer equations, the ReLU
encodings, and the final linear output equation gives a finite mixed integer
linear representation of the graph of
\[
    \mathbf{y} \mapsto E_\theta(\mathcal{G},\mathbf{y}).
\]
Thus, for fixed $\mathcal{G}$, the learned energy is MIP-representable.
\end{proof}

\subsection{Proof of Corollary~\ref{cor:feasible_inference}}

\begin{proof}
By Proposition~\ref{prop:representability}, the learned energy
$E_\theta(\mathcal{G},\mathbf{y})$ admits an exact mixed integer linear
representation. The feasible set
\[
    \mathcal{C}
    =
    \left\{
    \mathbf{y}\in\{0,1\}^{|\mathcal{V}|}
    \mid
    \mathbf{A}_y\mathbf{y}\le \mathbf{b}_y
    \right\}
\]
is described by linear constraints and binary restrictions.

The inference problem can therefore be written as the mixed integer linear
optimization problem
\[
\begin{aligned}
    \min_{\mathbf{y},\boldsymbol{\xi},e}
    \quad & e \\
    \text{s.t.}\quad
    & e = E_\theta(\mathcal{G},\mathbf{y}), \\
    & \mathbf{A}_y\mathbf{y}\le \mathbf{b}_y, \\
    & \mathbf{y}\in\{0,1\}^{|\mathcal{V}|},
\end{aligned}
\]
where $\boldsymbol{\xi}$ denotes all auxiliary readout and ReLU variables.

If $\mathcal{C}$ is nonempty and this optimization problem is solved to
global optimality, the returned solution is feasible by construction and
globally optimal with respect to the learned energy over $\mathcal{C}$.
If the solver is stopped early, any returned incumbent remains feasible,
although it need not be globally optimal with respect to the learned energy.
\end{proof}

\subsection{Proof of Corollary~\ref{cor:constraint_injection}}

\begin{proof}
The learned parameters $\theta$ define the energy function
$E_\theta(\mathcal{G},\mathbf{y})$. This function does not depend on the
particular deployment-time feasible region. The feasible region enters only
through the constraints imposed during inference.

Let
\[
    \mathcal{C}_{\mathrm{test}}
    =
    \left\{
    \mathbf{y}\in\{0,1\}^{|\mathcal{V}|}
    \mid
    \mathbf{A}_{\mathrm{test}}\mathbf{y}
    \le
    \mathbf{b}_{\mathrm{test}}
    \right\}
\]
be a test-time feasible set obtained by adding, removing, or modifying linear
constraints over the same decision variables. The same learned energy can be
optimized over the modified feasible set by solving
\[
    \mathbf{y}^\star
    \in
    \arg\min_{\mathbf{y}\in\mathcal{C}_{\mathrm{test}}}
    E_\theta(\mathcal{G},\mathbf{y}).
\]

Exactness of the mixed integer encoding requires valid finite bounds for all
ReLU pre-activations over the feasible set used at inference. If
$\mathcal{C}_{\mathrm{test}}$ is contained in the domain used to compute the
original Big-M constants, the original bounds remain valid. If the feasible set
is expanded or otherwise leaves that domain, the Big-M constants are recomputed
by bound propagation over $\mathcal{C}_{\mathrm{test}}$. This changes only the
encoding constants and does not require retraining the neural model.
\end{proof}

\subsection{Bound Tightening}

The exact ReLU encoding requires valid lower and upper bounds on each
pre-activation. In practice, these bounds are obtained using interval bound
propagation through the assembled readout and energy head. Tighter bounds make
the mixed integer linear representation stronger by reducing Big-M constants
and removing ReLU binaries whose activation status is fixed. The bound
computation used in the experiments is described in
Appendix~\ref{app:ibp}.

\subsection{Expressivity Beyond Factorized Linear Scoring}

We next formalize why the nonlinear energy head can represent interactions
that are impossible for additive per-variable scoring functions.

\begin{definition}[Factorized linear score]
\label{def:factorized_score}
A score over binary configurations is \emph{factorized linear} if it decomposes
additively over individual variables as
\[
    L(\mathbf{y}) = b+\sum_i w_i y_i.
\]
Such a score assigns a contribution to each selected variable independently
of which other variables are selected.
\end{definition}

\begin{proposition}[Expressivity beyond linear scoring]
\label{prop:quadratic_expressivity}
Let
\[
Q(\mathbf{y})
=
c+\sum_{i=1}^n a_i y_i+\sum_{(i,j)\in\mathcal{E}} q_{ij} y_i y_j
\]
be a quadratic real-valued function over binary variables
$\mathbf{y}\in\{0,1\}^n$. If the readout contains the binary variables
$\mathbf{y}$, or an affine representation from which they can be recovered,
then a one-hidden-layer ReLU energy head with at most
$n+|\mathcal{E}|$ hidden units can represent $Q(\mathbf{y})$ exactly on the
binary domain. If the architecture includes skip connections from the readout
input to the output layer, then $|\mathcal{E}|$ hidden units are sufficient.
Moreover, no factorized linear score can represent a quadratic function with a
nonzero interaction coefficient.
\end{proposition}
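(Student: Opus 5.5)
The construction rests on the identity, valid for binary $y_i,y_j\in\{0,1\}$,
\[
y_i y_j = \max\{0,\, y_i + y_j - 1\},
\]
which I will check on the four binary assignments: the right-hand side equals $1$ only when $y_i=y_j=1$, and equals $0$ otherwise since $y_i+y_j-1\le 0$. Similarly, each linear term is handled by $y_i=\max\{0,y_i\}$ on $\{0,1\}$.

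Under the readout assumption, the first layer has access to each $y_i$ as an affine function of $\mathbf{z}^0(\mathbf{y})$. If $\mathbf{y}$ appears directly, the affine map is a coordinate projection. If $\mathbf{y}$ is only affinely recoverable, say $\mathbf{y}=M\mathbf{z}^0+\mathbf{d}$, I compose this map into the first-layer weights.

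The hidden layer then consists of two groups of units. For each $(i,j)\in\mathcal{E}$ there is a unit with pre-activation $y_i+y_j-1$. For each $i$ there is a unit with pre-activation $y_i$. This gives at most $n+|\mathcal{E}|$ units. The output layer takes weights $q_{ij}$ on the pairwise units, weights $a_i$ on the linear units, and bias $c$. By the identity, the result equals $Q(\mathbf{y})$ for every $\mathbf{y}\in\{0,1\}^n$.

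With skip connections from the readout to the output, the linear part $c+\sum_i a_i y_i$ is an affine function of $\mathbf{z}^0$ and can pass through the skip path. Only the $|\mathcal{E}|$ pairwise units are then needed. Units with $a_i=0$ or $q_{ij}=0$ may be dropped, which explains the phrase ``at most''.

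For the negative claim, I will show that a factorized score $L(\mathbf{y})=b+\sum_i w_i y_i$ cannot equal $Q$ on $\{0,1\}^n$ when some $q_{kl}\neq 0$. Fix a pair $(k,l)\in\mathcal{E}$ with $k\neq l$ and set all other coordinates to $0$. Define the mixed second difference
\[
\Delta(f)=f(e_k+e_l)-f(e_k)-f(e_l)+f(\mathbf{0}).
\]
For any affine $L$ this gives $\Delta(L)=0$. For $Q$ it gives $\Delta(Q)=q_{kl}$, because only the $(k,l)$ term survives; this relies on $\mathcal{E}$ containing each unordered pair once, with coefficients collected. Hence $q_{kl}\ne 0$ yields a contradiction.

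The argument is routine, and the only care points are interpretive. First, I will read ``quadratic function with a nonzero interaction coefficient'' with $\mathcal{E}$ as a set of distinct unordered pairs, $i\neq j$. Diagonal terms $y_iy_i=y_i$ are linear on binaries and must be excluded or absorbed into $a_i$. Second, I will make explicit that exactness holds only on the binary domain, not on its continuous relaxation. This is precisely what the paper needs, since Proposition~\ref{prop:representability} and Lemma~\ref{lem:relu} then encode the resulting head exactly over $\{0,1\}^n$.
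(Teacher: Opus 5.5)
Your proposal is correct and follows the paper's proof essentially step for step: the same identity $y_iy_j=\max\{0,y_i+y_j-1\}$ together with $y_i=\max\{0,y_i\}$ for the linear terms, and the same mixed second-difference argument for the negative claim (the paper holds the remaining coordinates at arbitrary fixed values rather than at zero, which makes no difference). You also make explicit two points the paper leaves implicit: why the skip-connection variant needs only $|\mathcal{E}|$ hidden units, and that $\mathcal{E}$ consists of distinct unordered pairs with $i\neq j$.
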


\begin{proof}
For binary variables,
\[
y_i y_j = \max(0, y_i+y_j-1).
\]
Thus each quadratic interaction can be represented by one ReLU unit with
pre-activation $y_i+y_j-1$. The linear terms can be represented using
additional ReLU units since $y_i=\max(0,y_i)$ for $y_i\in\{0,1\}$. The final
linear layer combines these units with weights $q_{ij}$ and $a_i$, and adds the
bias $c$. Hence a one-hidden-layer ReLU network with at most
$n+|\mathcal{E}|$ hidden units represents $Q$ exactly on the binary domain.

To show that factorized linear scoring cannot represent nonzero interactions,
fix all coordinates of $\mathbf{y}$ except $y_i$ and $y_j$. For any factorized
linear score
\[
L(\mathbf{y}) = b + \sum_{\ell=1}^n w_\ell y_\ell,
\]
the mixed finite difference with respect to coordinates $i$ and $j$ is
\[
\Delta_{ij} L
=
L(\mathbf{y}^{11})
-
L(\mathbf{y}^{10})
-
L(\mathbf{y}^{01})
+
L(\mathbf{y}^{00})
=0,
\]
where $\mathbf{y}^{ab}$ denotes the vector obtained by setting
$y_i=a$ and $y_j=b$ while keeping all other coordinates fixed. For the
quadratic interaction $q_{ij}y_i y_j$, the same finite difference equals
$q_{ij}$. Therefore, if $q_{ij}\neq 0$, no factorized linear score can
represent the function exactly.
\end{proof}

Proposition~\ref{prop:quadratic_expressivity} shows that the nonlinear energy
head is not merely an implementation detail. It strictly enlarges the class of
binary objectives representable by exact Neural-MIP inference beyond additive
per-variable scoring.

\section{Benchmark Problem Formulations}
\label{app:benchmarks}

We provide the complete MIP formulations for all three benchmarks.
Instance generation procedures and dataset statistics are described
in Appendix~\ref{app:data}.

\subsection{Combinatorial Auctions (CA)}
\label{app:formulations:ca}

The combinatorial auction problem~\citep{leyton2000towards}
allocates a set of items $\mathcal{C}$ to a set of bids $\mathcal{B}$
so as to maximize total revenue. Each bid $b \in \mathcal{B}$ specifies
a bundle $\mathcal{N}(b) \subseteq \mathcal{C}$ together with a value
$p_b > 0$. Let $y_b \in \{0,1\}$ indicate whether bid $b$ is accepted.
The resulting 0--1 packing formulation is
\begin{equation}
    \max_{\mathbf{y} \in \{0,1\}^{|\mathcal{B}|}} \;\;
    \sum_{b \in \mathcal{B}} p_b \, y_b
    \quad \text{s.t.} \quad
    \sum_{b:\, c \in \mathcal{N}(b)} y_b \leq 1
    \quad \forall c \in \mathcal{C}.
    \label{eq:ca_milp}
\end{equation}
The packing constraints enforce that each item is allocated to at most
one accepted bid. The problem is NP-hard in general~\citep{leyton2000towards}.

We generate CA instances using the arbitrary-relations generator of
\citet{leyton2000towards}. We consider a custom CA training distribution with
$(|\mathcal{C}|, |\mathcal{B}|) = (100, 175)$ and evaluate cross-scale
generalization on a harder test distribution with
$(|\mathcal{C}|, |\mathcal{B}|) = (100, 250)$. Exact reference
solutions are generated with a time limit of $3600$\,s per instance.

\paragraph{Graph encoding.}
We represent each instance as a bipartite graph with item nodes
$\mathcal{C}$ and bid nodes $\mathcal{B}$. An edge $(c,b)$ is present if
and only if item $c$ belongs to bid $b$, i.e., if $A_{cb}=1$ in the
incidence matrix $\mathbf{A} \in \{0,1\}^{|\mathcal{C}| \times |\mathcal{B}|}$.
Our encoder is therefore a bipartite message-passing neural network over
constraint-side item nodes and variable-side bid nodes. 

\paragraph{Features.}
For CA, item-node features encode local participation statistics
(e.g., item degree and dummy-item indicators), while bid-node features
encode bid value, bundle size, value-per-item, and a structural conflict
proxy derived from competing bids. Both node types additionally receive a
small amount of repeated global context, such as instance density and
average bid statistics. Edge features encode normalized incidence-level
signals linking item popularity, bid value, and bundle size.

\paragraph{Readout.}
Let $\mathbf{h}_b$ and $\mathbf{h}_c$ denote the final embeddings of bid
$b$ and item $c$, respectively. For each item, an auxiliary continuous variable
$u_c$ indicates whether it is occupied by at least one selected bid:
\begin{equation}
u_c
=
\mathbb{I}
\left[
\sum_{b:\,c\in\mathcal{N}(b)}y_b \geq 1
\right].
\label{eq:ch5_ca_item_occupancy}
\end{equation}
This relationship is represented exactly using
\begin{align}
u_c
&\leq
\sum_{b:\,c\in\mathcal{N}(b)}y_b,
&& \forall c\in\mathcal{C},
\label{eq:ch5_ca_occupancy_upper}\\
u_c
&\geq y_b,
&& \forall c\in\mathcal{C},
\quad
\forall b:\,c\in\mathcal{N}(b).
\label{eq:ch5_ca_occupancy_lower}
\end{align}
The corresponding occupied-item embedding pool is
\begin{equation}
\mathbf{h}_{\mathrm{occ}}(\mathbf{y})
=
\sum_{c\in\mathcal{C}}u_c\mathbf{h}_c.
\label{eq:ch5_ca_occupied_pool}
\end{equation}

The readout additionally includes the compact scalar-summary vector
\begin{equation}
\mathbf{s}_{\mathrm{CA}}(\mathbf{y})
=
\left[
s_{\mathrm{bid}},
s_{\mathrm{occ}},
s_{\mathrm{dummy}},
s_{\mathrm{conf}},
s_{\mathrm{mass}},
s_{\mathrm{value}},
s_{\mathrm{bundle}},
s_{\mathrm{risk}}
\right],
\label{eq:ch5_ca_scalar_summary}
\end{equation}
whose components are the selected-bid fraction, occupied-item fraction,
occupied dummy-item fraction, conflicted-item fraction, normalized conflict
mass, normalized selected-value sum, normalized selected-bundle-size sum,
and normalized selected conflict-risk sum, respectively.

The complete CA readout is
\begin{equation}
\mathbf{z}^{0}_{\mathrm{CA}}(\mathbf{y})
=
\left[
\sum_{b\in\mathcal{B}}y_b\mathbf{h}_b,
\;
\sum_{c\in\mathcal{C}}u_c\mathbf{h}_c,
\;
\mathbf{s}_{\mathrm{CA}}(\mathbf{y}),
\;
\mathbf{h}_{\mathcal{G}}
\right].
\label{eq:ch5_ca_readout}
\end{equation}
The first pool represents the accepted bids, while the second captures their
consequences on the item side of the bipartite graph. The scalar summaries
provide compact information about selection density, item occupancy,
aggregate value, bundle size, and conflict structure.

\subsection{Quadratic Knapsack (QKP)}
\label{app:formulations:qkp}

The Quadratic Knapsack Problem~\citep{pisinger2007quadratic} selects
a subset of $n$ items to maximize a quadratic objective subject to a
linear capacity constraint:
\begin{equation}
    \max_{\mathbf{y} \in \{0,1\}^n} \;\;
    \sum_{i=1}^n p_i \, y_i \;+\;
    \sum_{i=1}^n \sum_{j=i+1}^n Q_{ij} \, y_i y_j
    \quad \text{s.t.} \quad
    \sum_{i=1}^n w_i \, y_i \leq W,
    \label{eq:qkp_milp}
\end{equation}
where $p_i$ is the linear profit of item $i$, $Q_{ij}$ is the pairwise
interaction profit between items $i$ and $j$, $w_i$ is the weight of
item $i$, and $W$ is the knapsack capacity. The quadratic term captures
both positive and negative interactions between co-selected items, which
motivates learning an energy function over complete configurations rather
than independent per-item scores.

We consider instances with $n=150$ items and interaction density
$\rho=0.30$. Linear profits are sampled as
$p_i \sim \mathcal{U}[0,10]$, item weights as
$w_i \sim \mathcal{U}[1,10]$, and the knapsack capacity is set to
$W = 0.5 \sum_i w_i$. Pairwise interactions are obtained from a
symmetric random matrix with off-diagonal entries drawn from
$\mathcal{U}[-2,8]$, which is then sparsified to the target density
$\rho$. Reference solutions are computed with Gurobi~\citep{gurobi}
using a time limit of $3600$\,s per instance.

\paragraph{Graph encoding.}
We represent each instance as a bipartite graph with item nodes
$I$ and interaction nodes $P$.
Each interaction node $p=(i,j)\in P$ corresponds to a nonzero
upper-triangular pairwise coefficient $Q_{ij}$ with $i<j$,
and is connected to both item nodes $i$ and $j$.
This construction allows message passing to propagate pairwise
interaction information into the item embeddings used by the
downstream readout.

\paragraph{Features.}
For QKP, item-node features encode linear profit, weight, interaction
degree, and average interaction strength, together with repeated global
context such as graph density and aggregate interaction scale.
Interaction-node features encode the magnitude and sign of the pairwise
term, along with simple summaries of the associated item weights and
profits. Edge features transmit normalized interaction strength and
capacity-relevant pair information.

\paragraph{Readout.}
Each QKP instance is represented as a bipartite graph between item nodes
$\mathcal{I}$ and interaction nodes $\mathcal{P}$. Each interaction node
$p=(i,j)\in\mathcal{P}$ corresponds to a nonzero upper-triangular pairwise
coefficient $Q_{ij}$ and is connected to item nodes $i$ and $j$. The
resulting item embeddings $\mathbf{h}_i$ therefore encode both item-level
attributes and information propagated from the quadratic interactions.

Let $y_i\in\{0,1\}$ indicate whether item $i$ is selected. In addition to
the selected-item embedding pool, QKP uses the unselected-item pool
\begin{equation}
\label{eq:qkp_unselected_pool}
\mathbf{h}_{\mathrm{unsel}}(\mathbf{y})
=
\sum_{i\in\mathcal{I}}(1-y_i)\mathbf{h}_i,
\end{equation}
the selected-item fraction
\begin{equation}
\label{eq:qkp_selected_fraction}
r_{\mathrm{sel}}(\mathbf{y})
=
\frac{1}{|\mathcal{I}|}
\sum_{i\in\mathcal{I}}y_i,
\end{equation}
and the normalized capacity usage
\begin{equation}
\label{eq:qkp_capacity_fraction}
r_{\mathrm{cap}}(\mathbf{y})
=
\frac{1}{W}
\sum_{i\in\mathcal{I}}w_i y_i.
\end{equation}

The additional structured component is
\begin{equation}
\label{eq:qkp_zlin}
\mathbf{z}_{\mathrm{lin}}^{\mathrm{QKP}}(\mathbf{y})
=
\left[
\mathbf{h}_{\mathrm{unsel}}(\mathbf{y}),
\;
r_{\mathrm{sel}}(\mathbf{y}),
\;
r_{\mathrm{cap}}(\mathbf{y})
\right].
\end{equation}
The complete QKP readout is
\begin{equation}
\label{eq:qkp_complete_readout}
\mathbf{z}_{\mathrm{QKP}}^{0}(\mathbf{y})
=
\left[
\sum_{i\in\mathcal{I}}y_i\mathbf{h}_i,
\;
\sum_{i\in\mathcal{I}}(1-y_i)\mathbf{h}_i,
\;
\frac{1}{|\mathcal{I}|}\sum_{i\in\mathcal{I}}y_i,
\;
\frac{1}{W}\sum_{i\in\mathcal{I}}w_i y_i,
\;
\mathbf{h}_{\mathcal{G}}
\right].
\end{equation}

The selected- and unselected-item pools expose information about both sides
of the current partition of the item set. The two scalar summaries describe
the size of the selected set and its consumption of the available knapsack
capacity. All four decision-dependent components are affine in
$\mathbf{y}$.

\subsection{Personalized Medicine Supply Chain (CART)}
\label{app:formulations:cart}

The personalized medicine supply chain problem optimizes the strategic 
selection of cell therapy manufacturing facilities and the routing of 
patient therapies through a multi-echelon logistics network under 
strict capacity and turnaround-time 
constraints~\citep{triantafyllou2022digital}.

\paragraph{Problem structure.}
The problem decomposes naturally into two levels:
\begin{itemize}[leftmargin=1.5em]
    \item \textbf{Strategic master problem}: select which subset of 
    $|\mathcal{M}|$ candidate manufacturing facilities to open. This 
    involves $|\mathcal{M}|$ binary variables $y_m \in \{0,1\}$.
    \item \textbf{Operational subproblem}: given open facilities, 
    route all patient therapies through the network to minimize total 
    cost subject to capacity and timing constraints. This involves 
    continuous and binary flow variables but becomes tractable once 
    the facility set is fixed.
\end{itemize}
The monolithic MILP is intractable at scale because the number of 
patient routing variables grows as $|P| \times |C| \times |M| \times 
|J| \times |T|$, which reaches millions for realistic instances. 
EnerGNN learns an energy over facility location configurations $\mathbf{y} = 
\{y_m\}$ which is then optimized under the strategic feasibility constraints; the operational subproblem induced by the selected configuration is subsequently solved exactly by Gurobi.

\paragraph{Full MILP formulation.}
We provide the detailed mixed integer linear programming (MILP) formulation for the personalized medicine CAR T-cell therapy supply chain problem. The model jointly optimizes facility location, patient assignment, and time-dependent logistics under capacity and turnaround-time constraints.

\subparagraph{Sets and indices}
Let \(\mathcal{P}\), \(\mathcal{C}\), \(\mathcal{M}\), \(\mathcal{H}\), \(\mathcal{J}\), and \(\mathcal{T}\) denote the sets of patients, clinical leukapheresis sites, candidate manufacturing facilities, hospitals, transportation modes, and discrete time periods, respectively. Indices are denoted by \(p\), \(c\), \(m\), \(h\), \(j\), and \(t\).

\subparagraph{Parameters}
\begin{itemize}
    \item \(C^{\mathrm{inv}}_m\): capital investment cost for opening manufacturing facility \(m\).
    \item \(C^{\mathrm{fix}}_m\): fixed operating cost of facility \(m\).
    \item \(C^{\mathrm{var}}_p\): patient-specific variable manufacturing cost.
    \item \(C^{\mathrm{qc}}\): quality-control cost per therapy.
    \item \(K_m\): maximum processing capacity, or number of parallel manufacturing lines, at facility \(m\).
    \item \(\delta_{pct}\): binary parameter equal to 1 if patient \(p\) arrives at clinical site \(c\) at time \(t\).
    \item \(\tau^{\mathrm{lp}}\): duration of leukapheresis.
    \item \(\tau^{\mathrm{man}}\): duration of manufacturing.
    \item \(\tau^{\mathrm{qc}}\): duration of quality control.
    \item \(\tau^{\mathrm{out}}_j\): transportation time from a clinical site to a manufacturing facility using mode \(j\).
    \item \(\tau^{\mathrm{ret}}_j\): transportation time from a manufacturing facility to a hospital using mode \(j\).
    \item \(C^{\mathrm{out}}_{cmj}\): transportation cost from clinical site \(c\) to facility \(m\) using mode \(j\).
    \item \(C^{\mathrm{ret}}_{mhj}\): transportation cost from facility \(m\) to hospital \(h\) using mode \(j\).
    \item \(B\): maximum number of manufacturing facilities that can be opened.
    \item \(D\): maximum allowable vein-to-vein turnaround time.
    \item \(N_P\): total number of patients in the planning horizon.
\end{itemize}

Variables with time indices outside \(\mathcal{T}\) are fixed to zero.

\subparagraph{Decision variables}

\textit{Facility location variables.}
\[
y_m \in \{0,1\}, \qquad \forall m\in\mathcal{M},
\]
where \(y_m=1\) if manufacturing facility \(m\) is opened.

\textit{Network design variables.}
\[
u_{cm} \in \{0,1\}, \qquad \forall c\in\mathcal{C},\,m\in\mathcal{M},
\]
where \(u_{cm}=1\) if clinical site \(c\) is connected to manufacturing facility \(m\).

\[
v_{mh} \in \{0,1\}, \qquad \forall m\in\mathcal{M},\,h\in\mathcal{H},
\]
where \(v_{mh}=1\) if manufacturing facility \(m\) is connected to hospital \(h\).

\textit{Transportation variables.}
\[
x_{pcmjt} \in \{0,1\},
\qquad
\forall p\in\mathcal{P},\,c\in\mathcal{C},\,m\in\mathcal{M},\,j\in\mathcal{J},\,t\in\mathcal{T},
\]
where \(x_{pcmjt}=1\) if patient \(p\)'s material is transported from clinical site \(c\) to facility \(m\) using mode \(j\) at time \(t\).

\[
w_{pmhjt} \in \{0,1\},
\qquad
\forall p\in\mathcal{P},\,m\in\mathcal{M},\,h\in\mathcal{H},\,j\in\mathcal{J},\,t\in\mathcal{T},
\]
where \(w_{pmhjt}=1\) if therapy \(p\) is transported from facility \(m\) to hospital \(h\) using mode \(j\) at time \(t\).

\textit{Flow variables.}
\[
\ell^{\mathrm{dep}}_{pcmjt} \ge 0,
\qquad
\ell^{\mathrm{arr}}_{pcmjt} \ge 0,
\]
denote, respectively, material leaving clinical site \(c\) for facility \(m\), and material arriving at facility \(m\), for patient \(p\), mode \(j\), and time \(t\).

\[
q^{\mathrm{in}}_{pmt} \ge 0,
\qquad
q^{\mathrm{out}}_{pmt} \ge 0,
\]
denote, respectively, therapy \(p\) entering and leaving manufacturing facility \(m\) at time \(t\).

\[
g^{\mathrm{out}}_{pct} \ge 0,
\qquad
g^{\mathrm{in}}_{pht} \ge 0,
\]
denote, respectively, material leaving clinical site \(c\), and final therapy arriving at hospital \(h\), for patient \(p\) at time \(t\).

\textit{Timing variables.}
\[
s_p, e_p, r_p \ge 0,
\qquad \forall p\in\mathcal{P},
\]
denote the start time, completion time, and turnaround time of patient \(p\).

\subparagraph{Objective function}

The objective minimizes total supply chain cost, including facility investment, fixed operating cost, patient-specific manufacturing cost, quality control, and transportation:

\begin{equation}
\begin{aligned}
\min \quad Z
=
&\sum_{m\in\mathcal{M}}
\left(C^{\mathrm{inv}}_m + C^{\mathrm{fix}}_m\right)y_m
+
\sum_{p\in\mathcal{P}}
\left(C^{\mathrm{var}}_p + C^{\mathrm{qc}}\right) \\
&+
\sum_{p\in\mathcal{P}}
\sum_{c\in\mathcal{C}}
\sum_{m\in\mathcal{M}}
\sum_{j\in\mathcal{J}}
\sum_{t\in\mathcal{T}}
C^{\mathrm{out}}_{cmj} x_{pcmjt} \\
&+
\sum_{p\in\mathcal{P}}
\sum_{m\in\mathcal{M}}
\sum_{h\in\mathcal{H}}
\sum_{j\in\mathcal{J}}
\sum_{t\in\mathcal{T}}
C^{\mathrm{ret}}_{mhj} w_{pmhjt}.
\end{aligned}
\label{eq:objective}
\end{equation}

Equivalently, if facility investment and fixed operating costs are allocated per patient as in the original implementation, the patient-level manufacturing cost may be written as
\begin{equation}
C^{\mathrm{man}}_p
=
\frac{
\sum_{m\in\mathcal{M}}
\left(C^{\mathrm{inv}}_m + C^{\mathrm{fix}}_m\right)y_m
}{N_P}
+
C^{\mathrm{var}}_p,
\qquad \forall p\in\mathcal{P}.
\label{eq:patient-manufacturing-cost}
\end{equation}

\subparagraph{Constraints}

\textit{Leukapheresis completion.}
A patient sample can leave the clinical site only after the leukapheresis duration has elapsed:
\begin{equation}
g^{\mathrm{out}}_{pc,t+\tau^{\mathrm{lp}}}
=
\delta_{pct},
\qquad
\forall p\in\mathcal{P},\,c\in\mathcal{C},\,t\in\mathcal{T}.
\label{eq:leukapheresis}
\end{equation}

\textit{Outbound shipment selection.}
Each patient is assigned to exactly one outbound shipment:
\begin{equation}
\sum_{c\in\mathcal{C}}
\sum_{m\in\mathcal{M}}
\sum_{j\in\mathcal{J}}
\sum_{t\in\mathcal{T}}
x_{pcmjt}
=
1,
\qquad
\forall p\in\mathcal{P}.
\label{eq:outbound-assignment}
\end{equation}

The outbound shipment variables define the material leaving the clinical site:
\begin{equation}
\ell^{\mathrm{dep}}_{pcmjt}
=
x_{pcmjt},
\qquad
\forall p,c,m,j,t.
\label{eq:outbound-flow-binary-link}
\end{equation}

All material leaving a clinical site must be assigned to a manufacturing facility and a transportation mode:
\begin{equation}
g^{\mathrm{out}}_{pct}
=
\sum_{m\in\mathcal{M}}
\sum_{j\in\mathcal{J}}
\ell^{\mathrm{dep}}_{pcmjt},
\qquad
\forall p,c,t.
\label{eq:clinical-departure-balance}
\end{equation}

\textit{Outbound transportation timing.}
Material arriving at the manufacturing facility is delayed by the transportation time of the selected mode:
\begin{equation}
\ell^{\mathrm{arr}}_{pcmj,t+\tau^{\mathrm{out}}_j}
=
\ell^{\mathrm{dep}}_{pcmjt},
\qquad
\forall p,c,m,j,t.
\label{eq:outbound-delay}
\end{equation}

The total material entering a manufacturing facility equals all arrivals from clinical sites:
\begin{equation}
q^{\mathrm{in}}_{pmt}
=
\sum_{c\in\mathcal{C}}
\sum_{j\in\mathcal{J}}
\ell^{\mathrm{arr}}_{pcmjt},
\qquad
\forall p,m,t.
\label{eq:manufacturing-arrival}
\end{equation}

\textit{Manufacturing and quality control timing.}
A therapy can leave the manufacturing facility only after manufacturing and quality control are completed:
\begin{equation}
q^{\mathrm{out}}_{pm,t+\tau^{\mathrm{man}}+\tau^{\mathrm{qc}}}
=
q^{\mathrm{in}}_{pmt},
\qquad
\forall p,m,t.
\label{eq:manufacturing-qc-delay}
\end{equation}

\textit{Return shipment after manufacturing.}
Every completed therapy must be assigned to exactly one return shipment:
\begin{equation}
\sum_{h\in\mathcal{H}}
\sum_{j\in\mathcal{J}}
w_{pmhjt}
=
q^{\mathrm{out}}_{pmt},
\qquad
\forall p,m,t.
\label{eq:return-shipment-selection}
\end{equation}

\textit{Hospital arrival timing.}
The final therapy arrives at the hospital after the return transportation delay:
\begin{equation}
g^{\mathrm{in}}_{ph,t+\tau^{\mathrm{ret}}_j}
=
\sum_{m\in\mathcal{M}}
w_{pmhjt},
\qquad
\forall p,h,j,t.
\label{eq:hospital-arrival-delay}
\end{equation}

Each patient receives exactly one final therapy delivery:
\begin{equation}
\sum_{h\in\mathcal{H}}
\sum_{t\in\mathcal{T}}
g^{\mathrm{in}}_{pht}
=
1,
\qquad
\forall p\in\mathcal{P}.
\label{eq:final-delivery}
\end{equation}

\textit{Manufacturing capacity.}
The number of therapies simultaneously occupying facility \(m\) cannot exceed the installed capacity:
\begin{equation}
\sum_{p\in\mathcal{P}}
\left(
\sum_{\tau=0}^{t} q^{\mathrm{in}}_{pm\tau}
-
\sum_{\tau=0}^{t} q^{\mathrm{out}}_{pm\tau}
\right)
\le
K_m y_m,
\qquad
\forall m\in\mathcal{M},\,t\in\mathcal{T}.
\label{eq:capacity}
\end{equation}

\textit{Network design and facility activation.}
A clinical site can only be connected to an opened manufacturing facility:
\begin{equation}
u_{cm} \le y_m,
\qquad
\forall c\in\mathcal{C},\,m\in\mathcal{M}.
\label{eq:clinical-facility-link}
\end{equation}

A manufacturing facility can only be connected to a hospital if the facility is opened:
\begin{equation}
v_{mh} \le y_m,
\qquad
\forall m\in\mathcal{M},\,h\in\mathcal{H}.
\label{eq:facility-hospital-link}
\end{equation}

Outbound shipments are allowed only along selected clinical-site-to-facility links:
\begin{equation}
x_{pcmjt} \le u_{cm},
\qquad
\forall p,c,m,j,t.
\label{eq:outbound-link-activation}
\end{equation}

Return shipments are allowed only along selected facility-to-hospital links:
\begin{equation}
w_{pmhjt} \le v_{mh},
\qquad
\forall p,m,h,j,t.
\label{eq:return-link-activation}
\end{equation}

The total number of established manufacturing facilities is bounded:
\begin{equation}
\sum_{m\in\mathcal{M}} y_m \le B.
\label{eq:facility-budget}
\end{equation}

\textit{Start, completion, and turnaround times.}
The start time of each patient is determined by the clinical site arrival time:
\begin{equation}
s_p
=
\sum_{c\in\mathcal{C}}
\sum_{t\in\mathcal{T}}
t\,\delta_{pct},
\qquad
\forall p\in\mathcal{P}.
\label{eq:start-time}
\end{equation}

The completion time is the arrival time of the final therapy at the hospital:
\begin{equation}
e_p
=
\sum_{h\in\mathcal{H}}
\sum_{t\in\mathcal{T}}
t\,g^{\mathrm{in}}_{pht},
\qquad
\forall p\in\mathcal{P}.
\label{eq:completion-time}
\end{equation}

Turnaround time is defined as the time between patient arrival and therapy delivery:
\begin{equation}
r_p = e_p - s_p,
\qquad
\forall p\in\mathcal{P}.
\label{eq:turnaround-definition}
\end{equation}

The vein-to-vein turnaround time must not exceed the clinical deadline:
\begin{equation}
r_p \le D,
\qquad
\forall p\in\mathcal{P}.
\label{eq:turnaround-limit}
\end{equation}

\paragraph{Graph encoding.}
We represent each instance as a complete bipartite graph between
clinical facility-demand nodes $\mathcal{C}$ and candidate manufacturing
facility nodes $\mathcal{M}$. Demand-node features summarize the local
patient arrival profile at each clinical site using aggregate demand
statistics, including total demand, variability, peak
demand, empirical quantiles, and seasonal Fourier components. Facility
node features encode normalized capacity, operating cost rate, and
demand-weighted transport statistics. Each edge $(c,m)$ encodes the
transport cost between clinical site $c$ and facility $m$ using a
normalized scalar together with an RBF expansion. This compressed graph
representation exposes the aggregate operational structure of the full
supply chain without explicitly instantiating patient-level routing
variables in the learned model.

\paragraph{Features.}
In the implementation used for our experiments, demand-node features
have dimension 14 and include local demand statistics together with a
small amount of repeated global context. Facility node features have
dimension 9 and encode capacity, normalized fixed cost rate, and
transport summaries, again augmented with repeated global context. Edge
features have dimension 6, consisting of one normalized transport cost
feature and a five-dimensional RBF expansion.

\paragraph{Readout.}
Each CART instance is represented as a complete bipartite graph
\[
\mathcal{G}_{\mathrm{CART}}
=
(\mathcal{C}\cup\mathcal{M},\mathcal{E}),
\qquad
\mathcal{E}=\mathcal{C}\times\mathcal{M},
\]
where $\mathcal{C}$ denotes clinical facility-demand nodes and $\mathcal{M}$ denotes
candidate manufacturing-facility nodes. The encoder returns demand-node
embeddings $\mathbf{h}_c$, facility-node embeddings $\mathbf{h}_m$, and a
global graph embedding $\mathbf{h}_{\mathcal{G}}$.

Let $y_m\in\{0,1\}$ indicate whether manufacturing facility $m$ is selected.
For each demand node $c$, an auxiliary binary variable $q_c$ indicates
whether it is covered by at least one selected facility:
\begin{equation}
\label{eq:cart_coverage_definition}
q_c
=
\mathbb{I}
\left[
\sum_{m\in\mathcal{M}(c)}y_m \geq 1
\right],
\end{equation}
where $\mathcal{M}(c)$ is the set of candidate facilities connected to
demand node $c$. This logical relationship is represented exactly through
\begin{align}
q_c
&\leq
\sum_{m\in\mathcal{M}(c)}y_m,
&&\forall c\in\mathcal{C},
\label{eq:cart_coverage_upper}\\
q_c
&\geq y_m,
&&\forall c\in\mathcal{C},
\quad
\forall m\in\mathcal{M}(c).
\label{eq:cart_coverage_lower}
\end{align}

The additional decision-dependent component is the pooled embedding of the
covered demand nodes:
\begin{equation}
\label{eq:cart_zlin}
\mathbf{z}_{\mathrm{lin}}^{\mathrm{CART}}(\mathbf{y})
=
\sum_{c\in\mathcal{C}}q_c\mathbf{h}_c.
\end{equation}
The complete CART readout is therefore
\begin{equation}
\label{eq:cart_complete_readout}
\mathbf{z}_{\mathrm{CART}}^{0}(\mathbf{y})
=
\left[
\sum_{m\in\mathcal{M}}y_m\mathbf{h}_m,
\;
\sum_{c\in\mathcal{C}}q_c\mathbf{h}_c,
\;
\mathbf{h}_{\mathcal{G}}
\right].
\end{equation}

The first term represents the selected manufacturing facilities, while the
second represents the clinical facility-demand nodes covered by the selected facility
set. CART therefore uses pooled learned embeddings from both sides of the
bipartite graph and does not include additional manually constructed scalar
summaries.

\section{Neural-MIP Encoding}
\label{app:mip_encoding}

We encode the learned ReLU energy head exactly inside the MIP using a
standard Big-M formulation for ReLU activations. The benchmark-specific
combinatorial constraints and the readout-level auxiliary variables are
then combined with this head encoding to yield the final inference
model.

\subsection{ReLU Head Encoding}

Let
\[
a_j^l = \mathbf{w}_j^\top \mathbf{h}^{l-1} + b_j
\]
denote the affine pre-activation of neuron \(j\) in layer \(l\), and let
\[
h_j^l = \max(0,a_j^l)
\]
be the corresponding ReLU activation. Suppose valid finite bounds
\[
\underline{a}_j^l \le a_j^l \le \overline{a}_j^l
\]
are available. For neurons whose bounds satisfy
\(\underline{a}_j^l < 0 < \overline{a}_j^l\), we introduce a binary
activation indicator \(z_j^l\in\{0,1\}\) and impose the exact mixed integer
linear formulation
\begin{align}
    h_j^l &\ge a_j^l, \\
    h_j^l &\ge 0, \\
    h_j^l &\le a_j^l-\underline{a}_j^l(1-z_j^l), \\
    h_j^l &\le \overline{a}_j^l z_j^l.
\end{align}
If \(\underline{a}_j^l \ge 0\), the neuron is always active and we set
\(h_j^l=a_j^l\). If \(\overline{a}_j^l \le 0\), the neuron is always
inactive and we set \(h_j^l=0\). The final linear layer is encoded by
linear equality constraints, producing the scalar energy minimized by
the inference MIP.

\subsection{Interval Bound Propagation}
\label{app:ibp}

The practical strength of the Neural-MIP depends on the tightness of the
pre-activation bounds. We obtain these bounds using interval bound propagation
through the assembled readout and energy head.

For an affine layer
\[
\mathbf{a}^l = W^l \mathbf{h}^{l-1} + \mathbf{b}^l
\]
with elementwise input bounds
\([\underline{\mathbf{h}}^{\,l-1}, \overline{\mathbf{h}}^{\,l-1}]\),
the pre-activation bounds are
\begin{align}
    \underline{a}_j^l
    &=
    b_j^l + \sum_i
    \min\!\left(
        W_{ji}^l \underline{h}_i^{\,l-1},
        W_{ji}^l \overline{h}_i^{\,l-1}
    \right), \\
    \overline{a}_j^l
    &=
    b_j^l + \sum_i
    \max\!\left(
        W_{ji}^l \underline{h}_i^{\,l-1},
        W_{ji}^l \overline{h}_i^{\,l-1}
    \right).
\end{align}

The corresponding post-ReLU bounds are
\[
\underline{h}_j^l = \max(0,\underline{a}_j^l),
\qquad
\overline{h}_j^l = \max(0,\overline{a}_j^l).
\]
These pre-activation bounds are used directly in the exact ReLU formulation
above. When the bounds imply a fixed activation state, the corresponding
binary variable is removed. Overall, the benchmark-specific readout linearizations and subsequent interval
propagation provide the neuron-wise pre-activation bounds used in exact MIP
inference.

\subsection{Benchmark-Specific Input Bounds}
\label{app:input_bounds}
The input intervals for this propagation step are derived from the exact
linearized readout features used by each benchmark.

\paragraph{CART.}
For CART, the readout combines pooled selected facility embeddings,
pooled covered-demand embeddings, and a global graph embedding. Bounds
for the selected-facility pooling terms are obtained from the facility
cardinality constraints \(k_{\min} \le \|\mathbf{y}\|_1 \le k_{\max}\).
Coverage terms are linearized using auxiliary binary variables
indicating whether each demand node is served by at least one selected
facility, and their bounds are computed from sign-split sums over the
corresponding coefficients.

\paragraph{CA.}
For CA, the readout includes pooled selected bid and occupied item
embeddings together with scalar summaries such as selected bid fraction,
occupied item fraction, and aggregate selected bid statistics. These are encoded with auxiliary continuous occupancy variables and count-based one-hot linearizations, which allow exact bounds to be propagated through the resulting MLP input.

\paragraph{QKP.}
For QKP, the readout pools selected item embeddings and, in the
size-aware variant, also includes unselected item summaries, selected
fraction, and used capacity mass. Under the knapsack constraint,
selected pool bounds are tightened using a fractional knapsack
relaxation rather than naive unconstrained cardinality intervals. This
yields substantially tighter input bounds for the downstream ReLU
encoding.

Overall, the benchmark-specific readout linearizations and the
subsequent interval propagation provide the neuron-wise Big-M constants
used in exact MIP inference.

\subsection{Problem-Specific Hard Feasibility Constraints}
\label{app:hard_constraints}
Beyond the energy encoding, the following hard constraints are added 
to the inference MIP for each benchmark:

\paragraph{CART.}
\begin{align}
    \sum_m y_m &\leq k_{\max}, \label{eq:cart-cardinality}\\
    \sum_m \kappa_m y_m &\geq \kappa_{\mathrm{ref}}, \label{eq:cart-capacity}\\
    y_j &\leq y_i \qquad \forall (i,j) \in \mathcal{S}. \label{eq:cart-symmetry}
\end{align}
where \eqref{eq:cart-cardinality} opens at most $k_{\max}$ facilities,
\eqref{eq:cart-capacity} requires total open capacity to meet the peak
reference demand $\kappa_{\mathrm{ref}}$, and \eqref{eq:cart-symmetry}
imposes symmetry-breaking cuts over pairs of identical capacity facilities
$\mathcal{S}$. 

\paragraph{CA.}
\begin{align}
    \sum_{b: c \in \mathcal{N}(b)} y_b \leq 1
    \qquad \forall c \in \mathcal{C}.
    \label{eq:ca-packing}
\end{align}
where \eqref{eq:ca-packing} enforces that each item is allocated to at most one bid.
Zero-shot injection experiments additionally introduce cardinality 
bounds $\sum_b y_b \leq K$ and bidder exclusion constraints 
$y_i + y_j \leq 1$ for specified pairs $(i,j)$.

\paragraph{QKP.}
\begin{align}
    \sum_i w_i y_i \leq W
    \label{eq:qkp-knapsack}
\end{align}
where \eqref{eq:qkp-knapsack} enforces that the total selected weight remains within the budget $W$.

\section{Dataset Splits and Candidate Pool Construction}
\label{app:data}

\paragraph{Dataset splits.}
For each benchmark, we use 800 training instances, 100 validation instances,
and 100 held-out test instances. 

For each training instance, we construct a candidate pool
$\mathcal{Y} = \mathcal{F} \cup \mathcal{I}$ containing feasible and
infeasible configurations. Pool construction is benchmark-specific.

\noindent\textbf{CART.}
For CART, each instance has 16 candidate facilities. We first solve the
unrestricted patient-level MIP to global optimality and collect incumbent
facility selection configurations from branch-and-bound. Additional candidates
are generated using capacity- and transport-guided heuristics, local Hamming
perturbations, swaps, facility additions and removals, and random configurations
with varying cardinality. Each candidate is evaluated by solving the corresponding
tactical patient scheduling MIP. Feasible configurations attaining the proven
global optimum within numerical tolerance are labeled positive, other feasible
configurations are labeled feasible negatives, and configurations certified
infeasible are labeled infeasible negatives. We retain 160 candidates per instance
through stratified selection, initially keeping up to three positives and targeting
30\% infeasible negatives among the remaining candidates.

\noindent\textbf{CA.}
For CA, we first solve each instance with Gurobi and collect incumbent
solutions from branch-and-bound. Feasible solutions with relative primal gap
at most $\epsilon_{\text{pos}} = 0.01$ are labelled positive, up to a
maximum of 50 positives per instance. We then construct 64 feasible
negatives using a quota-driven mixture of additional incumbent
solutions, guided local Hamming perturbations, and farther Hamming
perturbations of the best solution. The feasible-negative selection is
balanced across Hamming distance buckets and primal gap bands. Finally, we
generate 50 infeasible negatives by perturbing feasible masks into
different violation bands, corresponding to mild, moderate, and severe
item conflicts.

\noindent\textbf{QKP.}
For QKP, we impose a one-hour time limit on Gurobi per instance. Training and validation may include instances for which global optimality is not certified within this limit. In contrast, the test set contains instances
with Gurobi certified global optima, allowing all reported primal gaps
to be measured against proven optimal solutions.
For each training instance, we retain the best available solution
together with all incumbent solutions whose relative primal gap is
at most $\epsilon_{\text{pos}} = 0.01$; these form the positive subset.
We then generate 128 feasible negatives and 64 infeasible negatives
using stochastic mask proposals. Each proposal is either a random binary
mask with random density or a mutation of the best mask obtained by
flipping a random subset of items. Proposed masks that satisfy the
knapsack constraint are retained as feasible negatives, while those that
violate the capacity constraint are retained as infeasible negatives.

\subsection{Offline Computational Cost and Amortization}
\label{app:ch5_amortisation}

The main benchmark runtimes report the online inference and optimization
phase. EnerGNN also incurs a one-time offline cost for solver-based candidate
generation and neural model training. Table~\ref{tab:offline_costs} reports the
complete elapsed wall-clock cost of this offline phase. Candidate generation was distributed across 1000 submitted jobs,
with approximately 80 jobs running concurrently, subject to
scheduler availability. Reported elapsed wall-clock times include
queueing effects and therefore depend on cluster availability.

\begin{table}[H]
\caption{Offline computational cost and approximate amortization break-even
points.}
\centering
\small
\begin{tabular}{lrrrr}
\toprule
\textbf{Benchmark}
& \textbf{Label generation}
& \textbf{Training}
& \textbf{Total offline}
& \textbf{Break-even instances} \\
\midrule
QKP  & 14.7 h & 3.39 h & 18.1 h & $\approx125$ \\
CA   & 0.12 h & 5.54 h & 5.66 h & Not reached \\
CART & 123.2 h & 3.52 h & 126.72 h & $\approx130$ \\
\bottomrule
\end{tabular}
\label{tab:offline_costs}
\end{table}

For a benchmark where EnerGNN is faster online than solving the full problem,
the approximate number of deployment instances required to recover the
offline investment is calculated as
\[
N_{\mathrm{break-even}}
=
\frac{T_{\mathrm{offline}}}
     {T_{\mathrm{full}}-T_{\mathrm{EnerGNN}}},
\]
where $T_{\mathrm{offline}}$ includes label generation and training,
$T_{\mathrm{full}}$ is the mean time required to solve a new instance using
the original optimisation formulation, and $T_{\mathrm{EnerGNN}}$ is the
mean online EnerGNN time.

The offline investment is recovered after approximately 125 QKP instances
and 130 CART instances under the reported experimental conditions. No
break-even point is reported for CA because Gurobi is already faster on the
original auction formulation. These results make explicit that EnerGNN is
intended for repeated decision-making settings in which the one-time
solver supervision and training costs can be amortised across many deployment
instances.

\subsection{Computational Setup}
\label{app:computational_setup}

Experiments were conducted using the Imperial College London CX3
high-performance computing cluster and a local workstation.
Training, validation, and test datasets were generated on CPU
nodes with up to 250 GB of RAM. Neural models were trained
offline on GPU nodes equipped with NVIDIA Quadro RTX 6000 GPUs,
using one GPU, 4 CPU cores, and 24 GB of RAM.
Test time optimization and Neural-MIP inference were performed
on a workstation with 64 GB of RAM. Optimization runs used Gurobi 12.0.2.

Reported runtimes exclude dataset generation, candidate pool
construction, and neural network training. For learning-based
methods, reported online times include neural prediction or
encoding, construction of the corresponding optimization problem,
and all solver calls required to produce the final solution.

\section{Loss Configuration Details}
\label{app:loss_details}

\subsection{Loss Configuration Used in Reported Experiments}

All reported experiments use the same loss formulation, consisting of an
optimality-aware contrastive loss term, a feasible calibration term, and an infeasibility shaping term. The optimality-aware contrastive term is applied over the full
candidate pool. Across these runs, we use a bipartite MPNN encoder with hidden dimension
128, two message-passing rounds, and dropout 0.1. The reported runs use a shallow MLP energy head with one hidden layer of width 32.

\subsection{Training and Model Selection}

All models are trained using Adam with the benchmark-specific hyperparameters
reported in Table~\ref{tab:hyperparams}. Model selection is performed on the
validation set. For each run, we retain the checkpoint with the lowest
validation mean primal gap and report final test results from this checkpoint.

\paragraph{Adaptive Huber scale.}
For each candidate pool, the Huber transition scale is computed as
\begin{equation}
    \delta_{\mathcal{F}}
    =
    \min\!\left\{
    \delta_{\max},
    \;
    \max\!\left[
    \delta_{\min},
    \;
    \kappa_\delta
    \left(
    \max_{i\in\mathcal{F}} g_i
    -
    \min_{i\in\mathcal{F}} g_i
    \right)
    \right]
    \right\}.
\end{equation}

\begin{table}[h]
\setlength{\belowcaptionskip}{1pt}
\caption{Training hyperparameters for the full EnerGNN runs reported in the paper.
Here $\lambda_{\mathrm{inf}}$ denotes the infeasible shaping weight,
$T$ is the prediction softmax temperature, $\tau_{\mathrm{rank}}$ is the
target ranking temperature,
$\delta_{\mathrm{inf}}$ is the infeasible Huber transition scale.}
\centering
\small
\begin{tabular}{lccc}
\toprule
Hyperparameter & CART & CA & QKP \\
\midrule
Learning rate & $2\times 10^{-6}$ & $1\times 10^{-6}$ & $1\times 10^{-5}$ \\
Weight decay & $1\times 10^{-5}$ & $1\times 10^{-5}$ & $1\times 10^{-5}$ \\
Epochs & 2000 & 2000 & 2000 \\
$\lambda_{\mathrm{cal}}$ & 1.0 & 0.01 & 0.05 \\
$\lambda_{\mathrm{inf}}$ & 0.1 & 0.1 & 0.1 \\
$T$ & 0.1 & 0.2 & 0.5 \\
$\tau_{\mathrm{rank}}$ & 0.05 & 0.005 & 0.05 \\
$\delta_{\mathrm{inf}}$ & 0.1 & 0.1 & 0.1 \\
$\kappa_{\delta}$ & 0.15 & 0.15 & 0.15 \\
$\delta_{\min}$ & 0.01 & 0.01 & 0.01 \\
$\delta_{\max}$ & 0.5 & 0.5 & 0.5 \\
\bottomrule
\end{tabular}

\label{tab:hyperparams}
\end{table}

\section{Scale of the CART Formulation}
\label{sec:cart_scale_cons}

The number of strategic variables alone understates the computational scale of
the CART benchmark. Each strategic facility selection controls a large
patient-level assignment and scheduling formulation. Across the 100 test
instances with $m=16$, the dimensions of the complete monolithic formulations
are reported in Table~\ref{tab:cart_monolithic_dimensions}.

\begin{table}[H]
\caption{Dimensions of the complete monolithic CART formulations in the
extended $m=16$ test set.}
\centering
\small
\setlength{\tabcolsep}{5pt}
\begin{tabular}{lrrr}
\toprule
\textbf{Model component}
& \textbf{Minimum}
& \textbf{Maximum}
& \textbf{Mean} \\
\midrule
Constraints
& 2,284,802
& 92,139,833
& 46,766,602 \\

Binary variables
& 532,496
& 21,532,176
& 10,928,170 \\

Continuous variables
& 1,177,490
& 47,445,565
& 24,082,020 \\

General integer variables
& 8,336
& 336,456
& 170,768 \\

Total variables
& 1,718,322
& 69,314,197
& 35,180,958 \\
\bottomrule
\end{tabular}
\label{tab:cart_monolithic_dimensions}
\end{table}

EnerGNN selects 3.21 of the 16 candidate manufacturing facilities on average.
Fixing the strategic facility set substantially reduces the induced
patient-level scheduling formulations, as shown in
Table~\ref{tab:cart_reduced_dimensions}.

\begin{table}[H]
\caption{Mean dimensions of the complete and EnerGNN-reduced CART
formulations in the extended $m=16$ setting.}
\centering
\small
\setlength{\tabcolsep}{5pt}
\begin{tabular}{lrrr}
\toprule
\textbf{Model component}
& \makecell{\textbf{Full model mean}}
& \makecell{\textbf{EnerGNN-reduced mean}}
& \makecell{\textbf{Reduction (\%)}} \\
\midrule
Constraints
& 46,766,602
& 10,819,588
& 76.9 \\

Binary variables
& 10,928,170
& 2,434,726
& 77.7 \\

Continuous variables
& 24,082,020
& 5,499,214
& 77.2 \\

General integer variables
& 170,768
& 170,756
& 0.01 \\

Total variables
& 35,180,958
& 8,104,696
& 77.0 \\
\bottomrule
\end{tabular}
\label{tab:cart_reduced_dimensions}
\end{table}

The numbers of constraints, binary variables, continuous variables, and total
variables are reduced by approximately 77\% on average. The number of general
integer variables changes very little because these variables are primarily
associated with patients and scheduling periods rather than with the candidate
manufacturing facilities.

\section{Primal Gap Distributions}
\label{app:distributions}

Figure~\ref{fig:gap_distributions} shows the primal gap distribution 
across test instances for all learning-based methods on each benchmark. 
The x-axis uses a log scale to show both near zero and high gap 
distributions simultaneously. EnerGNN maintains a tight distribution 
near zero across all three benchmarks, confirming that the means 
reported in Table~\ref{tab:main_results} are representative. Primal gaps are computed relative to the global optimum.

\begin{figure}[H]
\centering
\includegraphics[width=\textwidth]{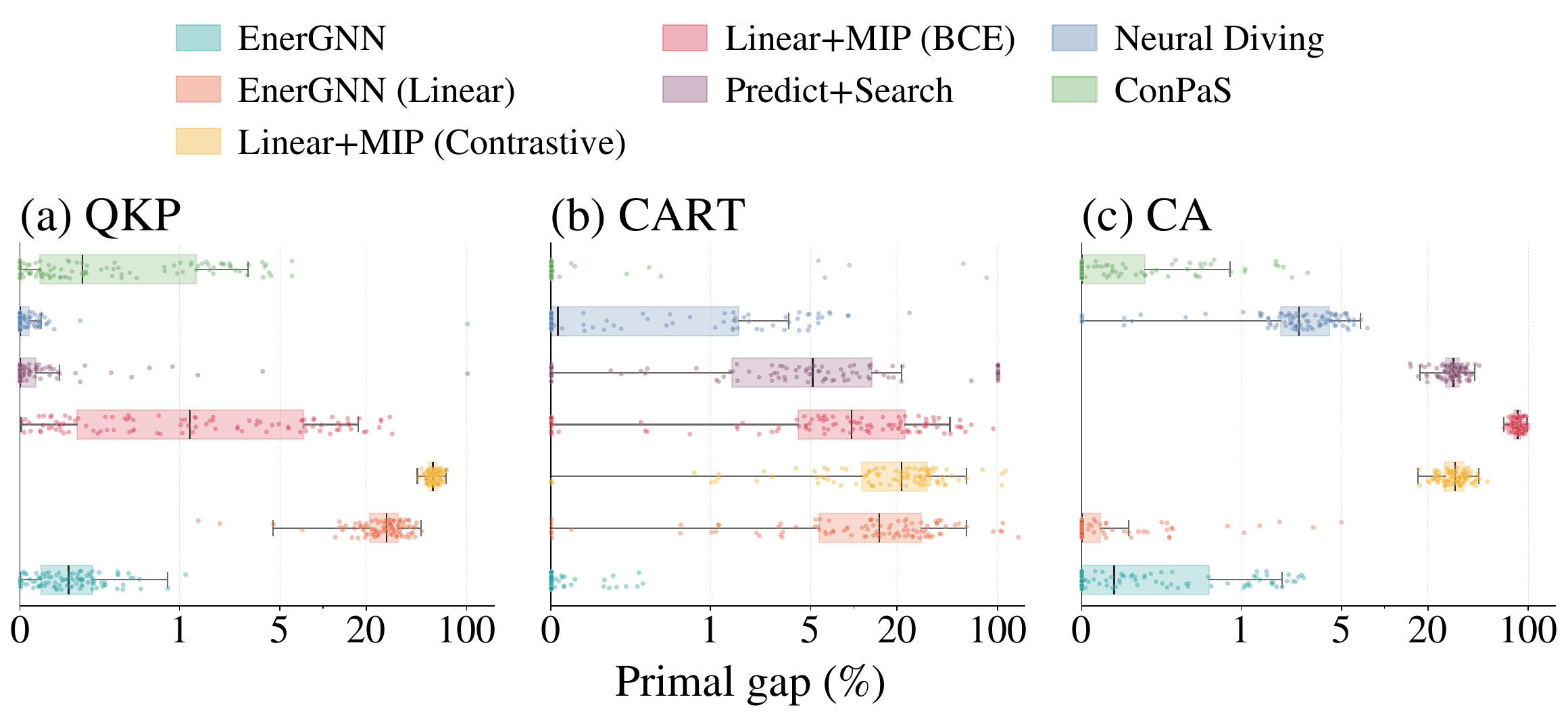}
\caption{Primal gap distributions for all learning-based methods. Infeasible outputs are assigned a 
100\% gap penalty.}
\label{fig:gap_distributions}
\end{figure}

\section{Additional Zero-Shot Constraint Injection Results}
\label{app:zero_shot}

We provide additional zero-shot constraint injection experiments in which the
trained energy is unchanged and only the feasible region of the inference MIP
is modified. In CART, cardinality and facility activation constraints restrict
the available strategic configurations. We use the linear head EnerGNN model for CA, as it achieved the best performance in the main experiments. Optimality gaps are measured relative to reference optima computed under the same injected constraints. These experiments complement the stronger constraint shifts reported in
Section~\ref{sec:experiments}, where variables selected in the original
optimal solution are explicitly made unavailable. 

\begin{table}[H]
\setlength{\belowcaptionskip}{1pt}
\caption{Additional zero-shot constraint injection results on CART.}
\centering
\small
\setlength{\tabcolsep}{5pt}
\begin{tabular}{lcc}
\toprule
\textbf{Injected constraint} & Primal Gap (\%) & $\leq1\%$ (\%) \\
\midrule
$\sum_m y_m \le 6$ & 0.06 & 100 \\
$\sum_m y_m \le 5$ & 0.07 & 100 \\
$\sum_m y_m \le 4$ & 0.12 & 98 \\
$\sum_m y_m \le 3$ & 0.30 & 93 \\
$y_{m_{10}}=0$ & 0.05 & 100 \\
$y_{m_{10}}=0,\; y_{m_{13}}=0$ & 0.03 & 100 \\
$y_{m_{7}}=0,\; y_{m_{10}}=0 ,\; y_{m_{13}}=0$ & 0.09 & 99 \\
$y_{m_{0}}=0,\; y_{m_{3}}=0 ,\; y_{m_{6}}=0$ & 0.07 & 99 \\
$y_{m_{0}}=1$ & 0.09 & 99 \\
$y_{m_{10}}=1$ & 1.75 & 79 \\
$y_{m_{0}}=1,\; y_{m_{10}}=1$ & 0.00 & 100 \\
$y_{m_{10}}=0,\; y_{m_{13}}=1$ & 0.99 & 89 \\
$y_{m_{10}}=0,\; y_{m_{13}}=0,\; y_{m_{1}}=1 ,\; y_{m_{7}}=1$ & 4.83 & 54 \\
$\sum_m y_m \le 4, y_{m_{10}}=0,\; y_{m_{13}}=0,\; y_{m_{1}}=1 ,\; y_{m_{7}}=1$ & 3.55 & 55 \\
$\sum_m y_m \le 5, y_{m_{0}}=1,\; y_{m_{10}}=1$ & 0.00 & 100 \\

\bottomrule
\end{tabular}
\label{tab:zero_shot_cart_appendix}
\end{table}

\begin{table}[H]
\setlength{\belowcaptionskip}{1pt}
\caption{Additional zero-shot constraint injection results on CA.}
\centering
\small
\setlength{\tabcolsep}{5pt}
\begin{tabular}{lcc}
\toprule
\textbf{Injected constraint}
& Primal Gap (\%)
& $\leq1\%$ (\%) \\
\midrule
$\sum_b y_b \le 30$ & 0.20 & 95 \\
$\sum_b y_b \le 20$ & 0.22 & 94 \\
$\sum_b y_b \le 15$ & 0.24 & 92 \\
$\sum_b y_b \le 10$ & 0.17 & 96 \\
$\sum_b y_b \le 5$ & 0.14 & 94 \\
\bottomrule
\end{tabular}
\label{tab:zero_shot_ca_appendix}
\end{table}

\section{Cross-scale Generalization with Nonlinear Energy Head on Combinatorial Auctions}
\label{app:cross_scale_ca}

For CA, the main paper reports cross-scale generalization using the
linear head variant of EnerGNN, which achieves the best performance
on this benchmark. For completeness, Table~\ref{tab:ca_nonlinear_scale}
reports the corresponding results using the nonlinear ReLU energy head,
evaluated without retraining.

\begin{table}[h]
\centering
\small
\caption{CA cross-scale generalization with the nonlinear energy head.}
\label{tab:ca_nonlinear_scale}
\begin{tabular}{lccc}
\toprule
\textbf{Train $\rightarrow$ test}
& \textbf{Primal Gap (\%)}
& \textbf{$\leq$1\% (\%)}
& \textbf{Feasibility (\%)} \\
\midrule
$175 \rightarrow 175$ & 0.53 & 77 & 100 \\
$175 \rightarrow 250$ & 1.40 & 45 & 100 \\
\bottomrule
\end{tabular}
\end{table}

\section{Ablation Study Details}


\subsection{Energy Head Architecture Ablation}
\label{app:head_arch}

We evaluate the effect of energy head width and depth on QKP and CART. In all cases,
the resulting Neural-MIP is solved to global optimality with respect to the
learned energy. Increasing the width or depth of the embedded ReLU head adds
activation binaries to the inference MIP. Across both benchmarks, larger heads
do not systematically improve solution quality, while deeper and wider
architectures can substantially increase solution time.


\begin{table}[h]
\centering
\setlength{\tabcolsep}{3pt}
\caption{Energy head architecture ablation on QKP.}
\label{tab:head_arch_qkp}
\begin{tabular}{lcccccc}
\toprule
\textbf{Head}
& \textbf{Layers}
& \makecell{\textbf{ReLU}\\\textbf{binaries}}
& \makecell{\textbf{Total}\\\textbf{binaries}}
& \makecell{\textbf{Gap}\\\textbf{(\%)}}
& \makecell{\textbf{$\leq1\%$}\\\textbf{(\%)}}
& \makecell{\textbf{Time}\\\textbf{(s)}} \\
\midrule
MLP-16 & 1 & 16  & 166 & 0.47 & 97 & 1.07 \\
MLP-16 & 2 & 32  & 182 & 0.31 & 100 & 2.77 \\
MLP-32 & 1 & 32  & 182 & 0.31 & 99 & 3.56 \\
MLP-32 & 2 & 64  & 214 & 0.35 & 100 & 19.78 \\
MLP-64 & 1 & 64  & 214 & 0.42 & 99 & 4.72 \\
MLP-64 & 2 & 128 & 278 & 0.37 & 99 & 186.23 \\
\bottomrule
\end{tabular}
\end{table}

\begin{table}[h]
\centering
\setlength{\tabcolsep}{3pt}
\caption{Energy head architecture ablation on CART.}
\label{tab:head_arch_cart}
\begin{tabular}{lccccc}
\toprule
\textbf{Head}
& \textbf{Layers}
& \makecell{\textbf{ReLU}\\\textbf{binaries}}
& \makecell{\textbf{Total}\\\textbf{binaries}}
& \makecell{\textbf{Gap}\\\textbf{(\%)}}
& \makecell{\textbf{Time}\\\textbf{(s)}} \\
\midrule
MLP-16 & 1 & 16  & 36  & 0.09 & 0.59 \\
MLP-16 & 2 & 32  & 52  & 0.13 & 1.33 \\
MLP-32 & 1 & 32  & 52  & 0.05 & 1.88 \\
MLP-32 & 2 & 64  & 84  & 0.02 & 3.55 \\
MLP-64 & 1 & 64  & 84  & 0.10 & 3.11 \\
MLP-64 & 2 & 128 & 148 & 0.07 & 24.69 \\
\bottomrule
\end{tabular}
\end{table}

For QKP, increasing head capacity does not consistently improve solution
quality. For example, the one-layer width-16 model achieves a 0.47\% mean
gap in 1.07~s, while the two-layer width-64 model achieves a similar
0.37\% gap but requires 186.23~s. The same pattern appears in CART. Increasing from a one-layer width-16 head to a two-layer width-64 head raises inference time from 0.59 s to 24.69 s, while the mean gap changes only from 0.09\% to 0.07\%.

These results illustrate the expressiveness--tractability trade-off of the
Neural-MIP formulation. Increasing the capacity of the embedded energy head
adds ReLU binaries and can substantially increase solution time without a
corresponding improvement in solution quality. Importantly, the GNN encoder is
evaluated before optimization and is therefore not part of the inference MIP.
Representational capacity can consequently be placed primarily in the graph
encoder while keeping the optimization-compatible energy head compact.

\subsection{Loss Variant Ablation}
\label{app:ablation}
We evaluate three loss configurations on each benchmark:

\begin{itemize}[leftmargin=1.5em]
    \item \textbf{Full loss}: all three components active 
    ($\mathcal{L}_\text{con} + \mathcal{L}_\text{cal} + 
    \mathcal{L}_\text{inf}$).
    \item \textbf{No primal gap term}: $\lambda_\text{cal} = 0$, contrastive and 
    infeasibility only.
    \item \textbf{No infeasibility term}: $\lambda_\text{inf} = 0$, 
    contrastive and calibration only.
\end{itemize}

\subsection{Strict Feasible--Infeasible Dominance}
\label{app:strict_dominance}

The standard training objective does not enforce that every feasible candidate
must have lower energy than every infeasible candidate. To test whether such a
complete ordering would be beneficial, we evaluate a stricter alternative. Let

\[
E_{\max}^{F}
=
\max_{i\in\mathcal F}
E_\theta(G,\mathbf y_i)
\]

denote the highest feasible energy in the current candidate pool. During
backpropagation, $E_{\max}^{F}$ is treated as constant. We replace the original
infeasibility shaping term with

\[
\mathcal L_{\mathrm{inf\text{-}margin}}
=
\frac{1}{|\mathcal I|}
\sum_{k\in\mathcal I}
\left[
E_{\max}^{F}
+
\log(1+\nu_k)
-
E_\theta(G,\mathbf y_k)
\right]_+ ,
\]

where $[a]_+=\max(0,a)$ and $\nu_k$ is the normalized constraint violation.
This term penalizes an infeasible candidate unless its energy lies at least
$\log(1+\nu_k)$ above the highest energy feasible candidate in the sampled
candidate pool, thereby explicitly enforcing feasible--infeasible ordering
within that pool.

We evaluate this alternative under the same Combinatorial Auctions protocol
using EnerGNN with the linear head, which is the best performing EnerGNN variant
for CA.

\begin{table}[H]
\setlength{\belowcaptionskip}{1pt}
\caption{Effect of enforcing strict feasible--infeasible dominance on CA.}
\centering
\small
\setlength{\tabcolsep}{5pt}
\begin{tabular}{lccccc}
\toprule
\textbf{Infeasibility loss}
& \makecell{\textbf{Mean gap}\\\textbf{(\%)}}
& \makecell{\textbf{Exact optimum}\\\textbf{(\%)}}
& \makecell{\textbf{$\leq1\%$}\\\textbf{(\%)}}
& \makecell{\textbf{Feasible}\\\textbf{(\%)}}
& \makecell{\textbf{Mean time}\\\textbf{(s)}} \\
\midrule
Current calibration      & 0.20 & 67 & 95 & 100 & 0.28 \\
Strict dominance margin  & 3.96 & 4  & 18 & 100 & 0.25 \\
\bottomrule
\end{tabular}
\label{tab:strict_dominance}
\end{table}

The stricter loss enforces the desired ordering but substantially reduces
solution quality: the mean gap increases from 0.20\% to 3.96\%, exact recovery
decreases from 67\% to 4\%, and the fraction of solutions within 1\% of
optimality decreases from 95\% to 18\%. Both variants return feasible solutions
on every test instance.

This result clarifies the role of infeasibility supervision. The ranking and
calibration terms are not intended to learn a global feasibility certificate,
feasibility with respect to the encoded constraints is enforced by the inference
MIP. Complete feasible--infeasible separation is therefore not required when
those constraints are explicitly encoded, and in this experiment enforcing it
substantially degrades optimization quality.

\subsection{Readout Design Ablation}
\label{app:readout_ablation}

We evaluate how the choice of MIP-compatible readout affects solution quality
and inference time. The proposed readouts combine pooled learned embeddings
with compact structural summaries. We compare them with simpler embedding-only
readouts and with variants that concatenate the complete binary decision vector
$\mathbf y$ directly to the learned embedding pools. The latter reduces the need
for structured summaries but produces an order-dependent, dimension-specific
input.

\begin{table}[H]
\setlength{\belowcaptionskip}{1pt}
\caption{Readout design ablation on CA and QKP.}
\centering
\small
\setlength{\tabcolsep}{4pt}
\begin{tabular}{llcccc}
\toprule
\textbf{Problem}
& \textbf{Readout}
& \makecell{\textbf{Mean gap}\\\textbf{(\%)}}
& \makecell{\textbf{Exact}\\\textbf{(\%)}}
& \makecell{\textbf{$\leq1\%$}\\\textbf{(\%)}}
& \makecell{\textbf{Time}\\\textbf{(s)}} \\
\midrule
CA  & Embedding pools only             & 0.20 & 68 & 95 & 0.25 \\
CA  & Full proposed readout            & 0.20 & 67 & 95 & 0.28 \\
CA  & Embedding pools + full $\mathbf y$ & 0.21 & 67 & 95 & 0.25 \\
\midrule
QKP & Selected item pool only          & 0.81 & 0  & 72 & 0.50 \\
QKP & Full proposed readout            & 0.31 & 4  & 99 & 3.56 \\
QKP & Selected item pool + full $\mathbf y$ & 0.94 & 0 & 61 & 1.17 \\
\bottomrule
\end{tabular}
\label{tab:readout_ablation}
\end{table}

For CA, removing the compact scalar summaries leaves performance essentially
unchanged, while directly exposing the complete decision vector provides no
improvement. For QKP, the structured readout is substantially more effective:
the mean gap decreases from 0.81\% to 0.31\% and the fraction within 1\%
increases from 72\% to 99\%. Directly concatenating the full decision vector
performs worse.

These results indicate that EnerGNN does not require a large collection of
hand-crafted features. A practical readout should remain compact,
permutation-invariant where possible, compatible with varying problem sizes,
and exactly MIP-representable. Pooled learned embeddings can provide most of
the representation, while compact structural summaries are useful when they
capture meaningful consequences of the selected configuration.

\section{Baseline Implementation Details}
\label{app:baselines}

All neural methods use the same bipartite GNN encoder architecture,
with method-specific prediction or energy heads. We describe these heads,
training objectives, and inference procedures below.

\paragraph{Predict+Search.}
Predict+Search applies a single linear layer to each variable embedding,
followed by a sigmoid, to predict its probability $p_j$ of taking value one.
The model is trained using binary cross entropy supervision on the variable
assignments of positive candidate solutions. At inference, we rank variables
by confidence $|p_j-0.5|$ and fix the most confident 85\% for QKP and 80\%
for CA to their predicted values $\mathbf{1}[p_j\geq0.5]$.
For CART, we instead use a two-sided confidence band, where facility variables
with $p_j\geq0.6$ are fixed to one, those with $p_j\leq0.4$ are fixed to
zero, and the remaining facility variables are left free.
We then solve the reduced MIP.

\paragraph{Linear+MIP.}
Linear+MIP applies a single linear layer to each variable embedding to
produce a scalar score $s_j$. These scores define an additive objective
$s(y)=\sum_j s_j y_j$, which is maximized under the original hard constraints.
We evaluate two training variants with the same architecture:
(i) binary cross entropy supervision on variable assignments, using
$\sigma(s_j)$ as the predicted probability, and
(ii) contrastive supervision over candidate solutions, using the energy
$E(y)=-s(y)$. At inference, both variants optimize the learned linear
objective over all decision variables using a MIP solver.

\paragraph{Neural Diving.}
Our Neural Diving~\citep{nair2020solving} baseline uses a two-layer
variable-level MLP head, with one hidden layer of width $d_h$, a ReLU
activation, and dropout, followed by a sigmoid, to predict an independent
Bernoulli probability $p_j$ for each binary decision variable. The model is trained with binary cross entropy supervision
on the variable assignments of positive candidate solutions. At inference, we
perform $K$ stochastic dives by sampling candidate assignments from these
Bernoulli probabilities. For each dive, a confidence-selected subset of
variables is fixed to the sampled assignment, and the reduced MIP is
optimized over the remaining free variables.
The final solution is the best feasible solution found across all dives. We use $K=20$ dives for QKP, $K=8$ for CART, and $K=100$ for CA.
The smaller dive budget for CART reflects the higher computational cost
of solving its reduced MIPs. For QKP and CA, we fix
the most confident 85\% and 50\% of variables, respectively. For CART, because
the number of facility variables is small, we instead use the same two-sided
confidence band as Predict+Search: variables with $p_j\geq0.6$ or $p_j\leq0.4$
are fixed to their sampled values, and the remaining variables are left free.

\paragraph{ConPaS.}
Our controlled ConPaS-style~\citep{huang2024contrastive} baseline applies
a 3-layer variable-level MLP head, with 2 hidden layers of width 64
and ReLU activations, followed by a sigmoid, to predict independent
Bernoulli probabilities $p_j$ for the binary decision variables.
It scores a candidate assignment by its factorized log probability,
\[
s(y)=\sum_j y_j \log p_j + (1-y_j)\log(1-p_j).
\]
Training uses an InfoNCE contrastive loss over the candidate pool, encouraging
positive solutions to receive higher scores than suboptimal feasible and
infeasible candidates. At inference, the neural model is not embedded in the
MIP. Instead, variables are ranked by $p_j$. Let $X_0$ denote the $k_0$
lowest probability variables and $X_1$ the $k_1$ highest probability variables.
We solve the original MIP with the additional rank-neighbourhood constraint
\[
\sum_{j\in X_0} y_j + \sum_{j\in X_1} (1-y_j) \leq \Delta,
\]
which allows at most $\Delta$ disagreements with the predicted low/high
probability sets while leaving all other variables unrestricted. Increasing $\Delta$ relaxes this rank-neighbourhood restriction by allowing
more disagreements with the neural ranking. This can improve solution quality
under constraint shift, where variables predicted to be unlikely may become
necessary because of exogenous constraints, but it also enlarges the MIP search
space. We therefore tune $\Delta$ per benchmark to balance solution quality and
solve time.
The rank-neighbourhood parameters are $(k_0,k_1,\Delta)=(55,20,5)$ for QKP,
$(8,2,1)$ for CART, and $(88,0,3)$ for CA. For CA, we keep the paper-scaled
choice $k_0=\lfloor n/2 \rfloor$ used by ConPaS. The original in-distribution ConPaS setting uses $\Delta=0$. However,
under our constraint shift setting, exogenous outage and force-on constraints
can make a small number of low-ranked bids necessary. We therefore introduce a
small slack of $\Delta=3$. We also evaluated the stricter paper-style setting
$(88,0,0)$, but it produced poor solution quality, with a mean gap of $6.23\%$,
despite being fast, with an average solve time of $0.05$s compared with
Gurobi's $0.14$s.


\section*{Broader Impact}

This work develops learning-augmented optimization methods for constrained
combinatorial decision-making. A potential positive impact is to make learned
decision systems more reliable in settings where feasibility is essential, such
as logistics, supply chain planning, manufacturing, and healthcare operations.
Rather than producing decisions directly, EnerGNN learns an objective landscape
that is optimized under explicit hard constraints. This may help decision-support
tools adapt to changing demand, capacity, or operational requirements while
preserving feasibility through the optimization layer.
In personalized medicine supply chains, such methods could support the evaluation
of strategic planning decisions under different patient demand and capacity
scenarios. More broadly, the approach may be useful in domains where learned
models must interact with exact algorithms to produce valid decisions.
The method is not intended to replace domain expertise, clinical judgement, or
human oversight in high-stakes settings. Exact constrained inference preserves
feasibility with respect to the specified constraints, but it does not guarantee
that the learned energy ranks all decisions correctly or that the specified
objective captures all clinical, ethical, or operational priorities. If deployed
without careful validation, a learned energy could favor suboptimal allocations
or reinforce undesirable trade-offs.

\end{document}